\theoremstyle{definitions}
\newtheorem{theorem}{Theorem}
\newtheorem{lemma}{Lemma}
\newtheorem{definition}{Definition}
\newtheorem{corollary}{Corollary}
\renewcommand{\algorithmiccomment}[1]{\bgroup\hfill//~#1\egroup}
\def\ccl{\textsc{C3L}}
\def\sgn{\mathrm{sign}}
\def\ln{\mathrm{ln}}
\def\R{\mathbb{R}}
\def\G{\mathcal{G}}
\def\Y{\mathcal{Y}}
\def\k{p^\alpha}
\newcommand\mm[1]{\hat{m}_{#1}}
\renewcommand\ss[1]{\hat{\sigma}_{#1}}
\renewcommand\SS[1]{\hat{\Sigma}_{#1}}
\newcommand\m[2]{m_{#1}^{#2}}
\newcommand\s[2]{\sigma_{#1}^{#2}}
\renewcommand\S[2]{\Sigma_{#1}^{#2}}
\newcommand\ssub[1]{_{#1}}
\newcommand\sssub[2]{_{#1:#2}}
\newcommand\usub[1]{{[#1]}}
\newcommand\ussub[2]{{[#1:#2]}}
\def\cl{\mathrm{class}}
\def\tr{\mathrm{tr}}
\def\det{\mathrm{det}}
\def\nor{\mathcal{N}}
\title{Semi-supervised model-based clustering with controlled clusters leakage}
\author{Marek \'Smieja \and \L{}ukasz Struski \and Jacek Tabor}
\date{\normalsize $^{1}$Faculty of Mathematics and Computer Science\\ Jagiellonian University\\ \L{}ojasiewicza 6, 30-348 Krakow, Poland}
\begin{document}

\maketitle

\begin{abstract}
In this paper, we focus on finding clusters in partially categorized data sets. We propose a semi-supervised version of Gaussian mixture model, called {\ccl}, which retrieves natural subgroups of given categories. In contrast to other semi-supervised models, {\ccl} is parametrized by user-defined leakage level, which controls maximal inconsistency between initial categorization and resulting clustering. Our method can be implemented as a module in practical expert systems to detect clusters, which combine expert knowledge with true distribution of data. Moreover, it can be used for improving the results of less flexible clustering techniques, such as projection pursuit clustering. The paper presents extensive theoretical analysis of the model and fast algorithm for its efficient optimization. Experimental results show that {\ccl} finds high quality clustering model, which can be applied in discovering meaningful groups in partially classified data.
\end{abstract}

\section{Introduction}

Model-based clustering aims at finding a mixture of probability models, which optimally estimates true probability distribution on data space. Contrary to other clustering techniques, it does not only recover meaningful groups, but also gives a rule (probability model) for generating elements from clusters. Therefore, it is commonly used in various areas of machine learning and data analysis \cite{wehrens2004model, salah2016model, spurek2017general}.

Although clustering is an unsupervised technique, one can introduce additional information to guide the algorithm what is the expected structure of clusters. Semi-supervised learning methods usually use partial labeling \cite{liu2015clustering} or pairwise constraints \cite{lu2007semi} to transfer expert knowledge into clustering process, while consensus and alternative clustering gather information from several partitions of data into one general view \cite{nguyen2007consensus, gondek2007non}.
In this paper, we assume that we have the knowledge about division of data set into two categories and focus on the following problem: {\em How to find the best model of clusters that preserves a fixed amount of information about existing categories?} In other words, we focus on finding interesting clusters, which are very likely to belong to one category.

To explain a basic motivation behind our model, let us consider an expert system used for automatic text translation. It is a common practice to construct several translation models, each designed for one cluster retrieved from a data set \cite{aggarwal2012mining}. Alternatively, since texts are often categorized into specific domains, e.g. sport, politics, etc., then each translator can be fitted to one of these categories. To consider together both options, we could implement a separate module responsible for finding clusters, which (a) are described by compact models (e.g. Gaussians) and (b) are related with predefined topics. Observe that optimization of these two conflicting goals simultaneously is non-trivial. We cannot cluster elements from each category individually, because this strategy does not lead to optimal solution for the entire data set (in terms of likelihood). Moreover, existing categorization might be inaccurate as well as the interesting groups can cross the boundary between predefined domains. Therefore, a better approach is to incorporate the constraint to the clustering process and always work with the entire data set.

Our method can also be applied to strictly unsupervised situations, where no initial categorization is given. Let us recall that one way to analyze clusters in complex data spaces relies on finding projections onto one dimensional subspaces, where groups can be easily identified. Projection pursuit focuses on choosing such a direction, which optimizes selected statistical index such as kurtosis \cite{pena2001cluster} or skewness \cite{loperfido2013skewness}. Since one dimensional views generate linear decision boundaries in original data space, it is not possible to find flexible cluster structures. However, we can input such a linear boundary to our model in order to improve existing clusters. Our method directly uses the information from initial splitting, but can extend linear decision surfaces to nonlinear ones generated by probabilistic mixture models.

\begin{figure*}[t]
\centering
\subfigure[$\alpha = 0.159$]{\label{a:jeden}\includegraphics[width=1.6in]{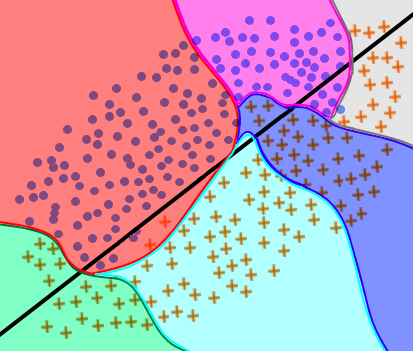}}\quad
\subfigure[$\alpha = 0.043$]{\label{a:dwa}\includegraphics[width=1.6in]{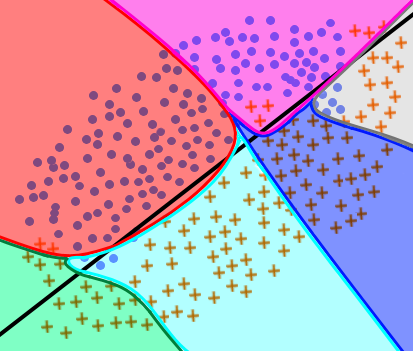}}\quad
\subfigure[$\alpha = 0.001$]{\label{a:dwa}\includegraphics[width=1.6in]{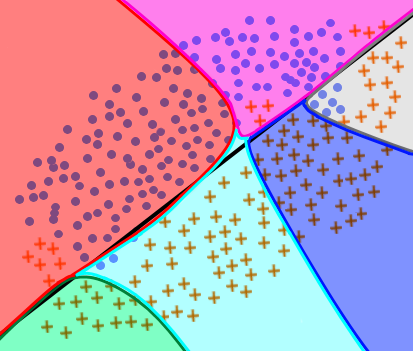}}
\caption{The effects of {\ccl} for different values of the leakage level $\alpha$.} \vspace{-0.2in}
\label{conv2}
\end{figure*}

Following the above motivation, we propose a semi-supervised clustering with controlled clusters leakage model ({\ccl}), which integrates a distribution of data with a fixed division of the space into two categories. {\ccl} focuses on finding a type of Gaussian mixture model (GMM) \cite{mclachlan2004finite}, which maximizes the likelihood function and preserves the information contained in the initial splitting with a predefined probability (leakage level). Intuitively, we allow for the flow of clusters densities over decision surface, but with a full control of total probability assigned to the opposite category, which is defined as the leakage level $\alpha \in (0,1)$ (see Figure \ref{conv2}). This general idea is formulated as a constrained optimization problem (Section \ref{sec:idea}). 

The advantages of {\ccl} can be summarized as follows:
\begin{enumerate}
\item It has a closed form solution in a special case of cross-entropy clustering (a type of GMM) \cite{tabor2014cross}.
\item It can be efficiently implemented and optimized by a modified on-line Hartigan algorithm (Section \ref{sec:opt}). 
\item The user can directly parametrize {\ccl} by a maximal inconsistency level between initial categorization and final clustering model (leakage level).
\item The selection of the leakage level $\alpha$ allows to move from a strictly unsupervised GMM for $\alpha = 0.5$, where decision boundary has no effect on clustering, to the limiting case of $\alpha \to 0$, where every group is fully condensed in one category (Section \ref{thoertical}).
\end{enumerate}

Experimental studies confirm that the proposed approach builds a high quality model under a given constraint in terms of inner clustering measures, such as Bayesian Information Criterion (Section \ref{sec:ex1}). It can be successfully used to discover meaningful groups in partially classified data (Section \ref{sec:ex2}) as well as to improve existing clusters obtained by applying projection techniques (Section \ref{sec:ex3}). We present a real-life case study, in which the use of {\ccl} allows to detect subgroups of chemical space given their division into active and inactive classes (Section \ref{exp:last}).

\section{Related work}

Semi-supervised clustering incorporates the knowledge about class labels to partitioning process \cite{basu2008constrained}. This information can be presented as partial labeling, which gives a division of a small portion of data into categories, or as pairwise constraints, which indicate whether two data points originate from the same (must-links) or distinct classes (cannot-links). Although pairwise constraints provide less amount of information than partial labeling, it is easier to assess whether two instances come from the same group than assign them to particular classes.

Clustering with pairwise constraints was introduced by Wagstaff et al. \cite{wagstaff2001constrained}, who created a variant of k-means, which focuses on preserving all constraints. Shental et al. \cite{shental2004computing} constructed a version of Gaussian mixture model, which gathers data points into equivalence classes (called chunklets) using must-link relation and then applied EM algorithm on such generalized data set of chunklets. This approach was later modified to multi-modal clustering models \cite{smieja2016constrained}. The aforementioned methods work well with noiseless side information, but deteriorate the results when some constraints are mislabeled. To overcome this problem, the authors of~\cite{basu2004probabilistic, Lu_PPC} applied hidden Markov random fields (HMRF) to construct more sophisticated dependencies between linked points. However, the use of HMRF leads to complex solutions, which are difficult to optimize. In recent years, Asafi and Cohen-Or. \cite{Asafi_ConstraintsFeatures} suggested reducing distances between data points with a must-link constraint and adding a dimension for each cannot-link constraint. After updating all other distances to, e.g., satisfy the triangle inequality, the thus obtained pairwise distance matrix can be used for unsupervised learning. Wang and Davidson\cite{Wang_FCSC} proposed a version of spectral clustering, which relies on solving a generalized eigenvalue problem.

Partial labeling is used in clustering to define sample data points from particular classes. Liu and Fu \cite{liu2015clustering} added additional attributes to feature vectors and proposed modified k-means algorithm. There is also a semi-supervised version of fuzzy c-means~\cite{pedrycz1997fuzzy, pedrycz2008fuzzy}, where the authors supplied the cost function with a regularization term that penalizes fuzzy partitions that are inconsistent with the side information. GMMs can be adapted to make use of class labels by combining the classical unsupervised GMM with a supervised one \cite{ambroise2001learning, zhu2009introduction}. 

Since assigning data points to classes or labeling pairwise constraints requires extensive domain knowledge, then many clustering methods were adapted to use additional information about data, which does not require human intervention. One example is consensus clustering, which considers gathering information coming from different domains \cite{nguyen2007consensus}. On the other hand, complementary (alternative) clustering aims at finding groups which provide a perspective on the data that expands on what can be inferred from previous partitions~\cite{gondek2007non}. 

{\ccl} is a version of Gaussian mixture model, which uses side information given by class labels or more generally by a decision boundary between classes. In contrast to classical methods applying partial labeling, it focuses on finding subgroups of original classes. This goal is similar to information bottleneck method \cite{tishby2000information, chechik2005information}. Roughly speaking, this approach tries to construct compact clusters (compressed representation), which contain high amount of information about existing classes (auxiliary variable). While information-theoretic approaches use mutual information (or conditional entropy) to preserve the consistency with an initial categorization, {\ccl} explicitly defines maximal probability of inconsistency (leakage level). The leakage level can be understood as Bayes error in classification or significance level in hypothesis testing and restricts every cluster model to be assigned to one of two initial classes with a predefined probability. Subgroups could also be detected by using cannot-link constraints, clustering with pairwise constraints does not allow to input maximal level of error. Moreover, computational complexity of applying cannot-link constraints to GMM is usually high, while {\ccl} works in a comparable time to classical unsupervised mixture models.

{\ccl} can be naturally combined with projection pursuit approach, which focuses on selecting low dimensional projections of data for finding clusters (or other meaningful characteristics of data). Such a projection can be determined by optimizing selected statistical coefficients e.g. kurtosis \cite{pena2001cluster, hou2014re} or skewness \cite{loperfido2013skewness, loperfido2015vector}. Since every one dimensional view induces linear decision boundary in the original space, this technique may not be sufficient to detect complex data patters. {\ccl} allows to take such a rough linear splitting of data and correct simple decision boundaries to nonlinear ones.

\section{Theoretical model} \label{sec:idea}

In this section, we introduce our model and discuss its possible extensions and applications.

Let a data space $\R^N$ be divided by a codimension one hyperplane\footnote{$(N-1)$-dimensional hyperplane} $H$ given by:
$$
H = \{x \in \R^N: h^Tx = a\},
$$
for fixed $h \in \R^N$ and $a \in \R$. The hyperplane $H$ induces hard classification rule: the class label of each point $x \in \R^N$ is determined by
\begin{equation}\label{class}
\cl_H(x)=\sgn(h^Tx-a).
\end{equation}
This splits a dataset $X \subset \R^N$ into two groups $X_+,X_-$ given by 
\begin{equation} \label{eq:division}
X_\pm=\{x \in X: \cl_H(x)=\pm 1\}.
\end{equation}
Alternatively, we can consider the initial classification of entire space $\R^N$ as
$$
H_\pm=\{x \in \R^N:\cl_H(x)=\pm 1\},
$$
and put $X_\pm=X \cap H_\pm$. Note that the uncertainty of class label is usually higher for elements localized closer to the barrier than for those with larger distance from $H$.

In a model-based clustering we focus on estimating a density of data space with a use of mixture of $k$ densities, $g=\sum_{i=1}^k p_i g_i$, where every $g_i$ belongs to a given parametric family of densities (usually Gaussian) and $p_i$ are prior probabilities \cite{mclachlan2004finite}. This goal can be practically realized by maximizing the likelihood function. Let us define the inconsistency between a cluster density $g_i$ and the initial classification:
$$
\alpha_i = \min \{ \int_{H_-} g_i(x) dx, \int_{H_+} g_i(x) dx \}.
$$
The above formula gives the amount of probability that is spread to opposite class and it is related with Bayes error of Gaussian model assuming that $H$ predicts the class membership correctly.

Our question is: how to find a clustering model that optimizes a likelihood function and provides high consistency with initial classification? If we knew that $H$ gives a perfect classification rule, then we could try to keep every model $g_i$ maximally consistent with $H$, i.e. perform a separate clustering of every category. However, this is usually not the case and this strategy does not guarantee to obtain optimal solution (in terms of likelihood) for the entire data set. Moreover, some interesting groups can cross the decision boundary. Therefore, we should allow for the flow of corresponding densities over a decision surface, but with the full control of the total probability assigned to the opposite class. In our approach, we formulate a constrained optimization problem, where we aim at finding such a mixture model $g$ that maximizes the quality of density estimation and preserves a fixed inconsistency level $\alpha$, i.e. every component $g_i$ has to satisfy $\alpha_i \leq \alpha$. 

One could probably try to realize the above goal by a classical GMM approach, however, at very high numerical and theoretical cost. It would be impossible to get a closed form solution and complex non-linear optimization would be needed. Therefore, in this paper we have decided to use cross-entropy clustering (CEC) \cite{tabor2014cross, spurek2017r, smieja2015spherical}, which similarly to GMM divides data with respect to Gaussian distributions. Contrary to GMM, in CEC the clusters do not ``cooperate'' one with another to build the global cost function\footnote{Instead of optimizing a density $p_i g_1 + \ldots+ p_k g_k$, CEC finds optimal subdensity $\max\{p_i g_1, \ldots, p_k g_k\}$, i.e. every point $x$ is linked with exactly one model $g_i$.} and consequently it is enough to calculate the cost function for each cluster individually. 

CEC is based on Minimum Description Length Principle (MDLP) \cite{rissanen1985minimum} and focuses on minimizing the generalized cross-entropy function, by selecting optimal Gaussian probability distribution for each cluster\footnote{The minimization of cross-entropy is equivalent to the maximization of likelihood function.}. Given a single cluster $X$ and corresponding density $g$, the empirical cross-entropy function equals:
$$
h^\times(X\|g) = - \frac{1}{|X|} \sum_{x \in X} \ln (g(x)).
$$
In the case of Gaussian densities, $g=\nor(m,\Sigma)$, the above
formula can be reduced to its closed form:
\begin{equation}\label{crossNormal}
\begin{array}{l}
h^{\times}(X\|\nor(m,\Sigma))= \\
\tfrac{N}{2}\ln(2\pi)+
\tfrac{1}{2}\|\mm{X}-m\|_{\Sigma}+\tfrac{1}{2}\tr(\Sigma^{-1}\SS{X})+\tfrac{1}{2}\ln \det (\Sigma),
\end{array}
\end{equation}
where 
\begin{equation} \label{meanCovN}
\begin{array}{l}
\mm{X} := \frac{1}{|X|} \sum\limits_{x \in X} x, \\
\SS{X}  := \frac{1}{|X|} \sum\limits_{x \in X} (x  - \mm{X}) (x  - \mm{X})^T,
\end{array}
\end{equation}
denote the sample mean and covariance of $X$ and 
$$
\|x\|_{\Sigma} := x^T \Sigma^{-1} x
$$
is the Mahalanobis norm. Given $k$ clusters $X_1,\ldots,X_k$ the overall cross-entropy function equals
\begin{equation}\label{eq:CECcost}
\sum_i p_i (-\ln p_i) + p_i h^\times(X_i\|g_i),
\end{equation}
where $g_i$ is a Gaussian density with parameters given by \eqref{meanCovN} for $X_i$ and $p_i = \frac{|X_i|}{|X|}$.

The term $p_i (-\ln p_i)$ adds a cost for maintaining a cluster. In consequence, the method tends to keep the model simple and allows for the reduction of redundant clusters. Therefore, CEC cost function \eqref{eq:CECcost} combines the model accuracy with its complexity.

With this, we are ready to define our {\ccl} model. First, we give a definition of a linear constraint, which restricts every cluster model to one category with a fixed probability.
\begin{definition}
Let $H$ be a codimension one hyperplane on $X \subset \R^N$ and let $\alpha > 0$. We say that a density $g$ satisfies a linear constraint $(H,\alpha)$, if 
\begin{equation} \label{eq:model}
\text{either }\int_{H_-}g(x)dx\geq 1-\alpha \text{ or }
\int_{H_+}g(x)dx\geq 1-\alpha.
\end{equation}
The number $\alpha$ will be referred as the leakage level.
\end{definition}
The above definition of linear constraint is analogical to linear separability with power $\alpha$ given in \cite[Section 2]{pena2001cluster} in the context of projection pursuit.

If $\alpha \geq \frac{1}{2}$ then an arbitrary density satisfies one of the conditions given by \eqref{eq:model}. Therefore, {\em a strict constraint} is given by $\alpha  < \frac{1}{2}$. Observe that the above constraint is reminiscent of a typical approach used in hypothesis testing, where we accept a given hypothesis if it lies within a predefined percentage of density.
{\em In our case we consider only those density cluster models, which lie with a probability $(1-\alpha)$ on one side of decision boundary.} 

We now introduce a linear constraint to CEC framework. First, we define the criterion function for a single cluster:
\begin{definition} (One cluster {\ccl} cost function) \label{oneCost}
Let $X \subset \R^N$ be divided by a codimension one hyperplane $H$. Given $\alpha >0$ and a family $\G$ of Gaussian densities, {\ccl} cost function for $X$ is defined by 
\begin{equation}\label{eq:onecluster}
\begin{array}{l}
E^\alpha_H(X\|\G):=\\
\inf \{h^{\times}(X\|g) : \, g \in \G \text{ which satisfies constraint } (H,\alpha)\},
\end{array}
\end{equation}
where $h^{\times}(X\|g)$ is given by \eqref{crossNormal}.
\end{definition}
We always assume that a covariance matrix of $X$ is nonsingular, i.e., $\det (\SS{X}) \neq 0$. This prevents from the situation when $X$ lies in the subspace of $\R^N$, which might lead to degenerate solutions.

The overall {\ccl} cost is defined as follows:
\begin{definition} (Overall {\ccl} cost function for clustering) \label{moreCost} Let $X \subset \R^N$ be divided by a codimension one hyperplane $H$. Given a splitting of $X$ into clusters $X_1,\ldots,X_k$, a family $\G$ of Gaussian densities and $\alpha > 0$, the total {\ccl} clustering cost equals 
\begin{equation} \label{eq:total}
\begin{array}{l}
E^\alpha_H(X_1,\ldots,X_k\|\G)=\\
\sum_{i=1}^k p_i (-\ln p_i+E^\alpha_H(X_i\|\G)) \text{, where }p_i=\frac{|X_i|}{|X|}.
\end{array}
\end{equation}
\end{definition}
The optimal clustering is the one, which minimizes the above cost function (the optimization problem will be the subject of the next section). We emphasize that {\ccl} accepts arbitrary Gaussian densities as mixture components, which satisfy the linear constraint. In particular, each Gaussian can have distinct covariance matrix.

Let us observe that introduced model can be applied in the case of any decision boundary (not only linear hyperplane). If $f_+$ and $f_-$ are two decision functions that quantify the chance of assigning data points to positive and negative classes, then the label of an instance $x \in \R^N$ is chosen as 
\begin{equation} \label{discr}
\cl(x) = \mathrm{arg} \max\limits_{j=\pm} f_j(x).
\end{equation}
Clearly, for two class problem we can define one discriminant $f = f_+ - f_-$. In consequence, the formula \eqref{discr} can be simplified to:
\begin{equation}\label{discr2}
\cl(x) = \sgn f(x).
\end{equation}

Next, we extend the input space $\R^N$ to $\R \times \R^N$ and embed our data set $X$ into this space by:
$$
X \ni x \to (f(x),x) \in \R \times \R^N.
$$ 
Observe that a hyperplane $H = \{0\} \times \R^N$ gives the same classification rule in $\R^{N+1}$ to the formula \eqref{discr2} in $\R^N$. 

Let $\G_N$ denotes the set of all Gaussian densities on $\R^N$ and let $\G_{1,N}$ be the set of Gaussian densities, that can be factorized into two independent components, defined by:
\begin{equation}\label{eq:den}
\G_{1,N}:=\{ g(x) = g_1(x\ssub{1}) \cdot g_{N}(x\sssub{2}{N+1}) \text{ : } g_1 \in \G_1,g_{N} \in \G_{N}\},
\end{equation}
where $x\sssub{k}{l}=(x_k,\ldots,x_l)$, for $x=(x_1,\ldots,x_{N+1})$. If we consider a mixture model $g = \sum_i p_i g_i$, where $g_i = g^i_1 \cdot g^i_N \in \G_{1,N}$, then the first component $g^i_1$ will describe a distribution of discrimination function $f$ while $g^i_{N}$ will describe a density in the original space $X$. Therefore, the use of $\G_{1,N}$ allows to model a distribution of data and discriminant function individually. We will use the family $\G_{1,N}$ in the next section.

\section{Optimization} \label{sec:opt}

Without loss of generality, we assume that a decision boundary of $X \subset \R^N$ is given by a hyperplane\footnote{Observe that given an arbitrary hyperplane one can always shift the original data and change the basis of $\R^N$ in an orthonormal way to obtain this situation.} $H=\{0\} \times \R^{N-1}$. From now on, our attention is restricted to the class of Gaussian densities $\G_{1,N-1}$ defined by \eqref{eq:den}, i.e. we assume that every component $g$ is of the form $g = g_1 \cdot g_{N-1}$, where $g_1$ is $1$-dimensional and $g_{N-1}$ is $(N-1)$-dimensional Gaussian density. This model suits perfectly to the case of arbitrary decision boundary described at the end of section \ref{sec:idea}, where we model the data distribution and the values of decision support function separately.


We are going to show that in the case of $\G_{1,N-1}$ we can compute one cluster cost function analytically. Let us first observe that the selection of 1-dimensional density $g_1 \in \G_1$ and $(N-1)$-dimensional density $g_{N-1} \in \G_{N-1}$ can be done separately in {\ccl} clustering. First, we verify that the linear constraint is independent of $g_{N-1}$, i.e.,
$$
\begin{array}{l}
\int_{[0,+\infty)\times \R^{N-1}} g(x)  dx= \int_0^{+\infty} g_1(x\ssub{1}) dx\ssub{1},\\ 
\int_{(-\infty,0]\times \R^{N-1}} g(x) dx = \int_{-\infty}^0 g_1(x\ssub{1}) dx\ssub{1}.
\end{array}
$$
Then, observe that the cross-entropy between $X$ and $g$ can be calculated as:
$$
h^\times(X \| g) =  h^\times(X\usub{1} \| g_1) + h^\times(X\ussub{2}{N} \| g_{N-1}),
$$
where $X\ussub{k}{l}$ denotes a data set $X$ restricted to the attributes from $k$ to $l$. This follows from
$$
\begin{array}{l}
h^\times(X \| g)  \\
=\sum \limits_{x \in X} -\ln g(x)= \sum\limits_{x \in X} -\ln (g_{1}(x\ssub{1}) \cdot g_{N-1}(x\sssub{2}{N})) \\[1ex]
=  \sum\limits_{x \in X} \big( -\ln (g_{1}(x\ssub{1}))- \ln(g_{N-1}(x\sssub{2}{N})) \big) \\[1ex]
= \sum\limits_{x\ssub{1} \in X\usub{1}} -\ln (g_{1}(x\ssub{1}))+
\sum\limits_{x_{N-1} \in X\ussub{2}{N}}- \ln(g_{N-1}(x\ssub{N-1})) \\[1ex]
= h^\times(X\usub{1} \| g_1) + h^\times(X\ussub{2}{N} \| g_{N-1}).
\end{array}
$$

Since the product densities can be selected individually, the optimization subject to the constraint is performed in one dimension. 
\begin{corollary} \label{cor:reduction}
Let $(H,\alpha)$ be a linear constraint defined on dataset $X \subset \R^N$, where $H=\{0\} \times \R^{N-1}$. Then the one cluster {\ccl} cost function of $X$ is given by:
$$
E^\alpha_H(X \| \G_{1,N-1}) = E^{\alpha}_{\{0\}}(X\usub{1}\|\G_1)+
h^\times(X\ussub{2}{N}\|\G_{N-1}),
$$
where $E^{\alpha}_{\{0\}}((X\usub{1}\|\G_1)$ is one cluster {\ccl} cost function \eqref{eq:onecluster} calculated in one dimensional situation. 
\end{corollary}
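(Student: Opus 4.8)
The plan is to read the claim off directly from the two structural facts established immediately above: (i) for a product density $g = g_1 \cdot g_{N-1} \in \G_{1,N-1}$, membership in the feasible set cut out by the linear constraint $(H,\alpha)$ with $H = \{0\}\times\R^{N-1}$ depends only on the one-dimensional factor $g_1$, since $\int_{[0,+\infty)\times\R^{N-1}} g = \int_0^{+\infty} g_1$ and $\int_{(-\infty,0]\times\R^{N-1}} g = \int_{-\infty}^0 g_1$; and (ii) the empirical cross-entropy is additive over the two coordinate blocks, $h^\times(X\|g) = h^\times(X\usub{1}\|g_1) + h^\times(X\ussub{2}{N}\|g_{N-1})$. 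Taken together, these say that the infimum defining $E^\alpha_H(X\|\G_{1,N-1})$ is an infimum of a separable objective over a product feasible set, and such an infimum splits into a sum.

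In more detail, by \eqref{eq:den} choosing $g\in\G_{1,N-1}$ amounts to choosing a pair $(g_1,g_{N-1})\in\G_1\times\G_{N-1}$, and by (i) the constraint restricts this pair exactly to $\{g_1\in\G_1 : g_1 \text{ satisfies } (\{0\},\alpha)\}\times\G_{N-1}$. Using (ii) in Definition \ref{oneCost} gives
\[
E^\alpha_H(X\|\G_{1,N-1}) = \inf_{g_1,\,g_{N-1}}\big(h^\times(X\usub{1}\|g_1)+h^\times(X\ussub{2}{N}\|g_{N-1})\big),
\]
where $g_1$ ranges over the constrained subfamily of $\G_1$ and $g_{N-1}$ over all of $\G_{N-1}$. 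The first summand is independent of $g_{N-1}$, the second is independent of $g_1$, and the domain is a Cartesian product, so the infimum separates into $\inf_{g_1}h^\times(X\usub{1}\|g_1) + \inf_{g_{N-1}}h^\times(X\ussub{2}{N}\|g_{N-1})$. By \eqref{eq:onecluster} the first term is precisely the one-dimensional {\ccl} cost $E^\alpha_{\{0\}}(X\usub{1}\|\G_1)$, and the second is the unconstrained minimal cross-entropy $h^\times(X\ussub{2}{N}\|\G_{N-1})$, which by \eqref{crossNormal} is attained at the Gaussian with sample mean and covariance of $X\ussub{2}{N}$.

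There is no real obstacle here; the statement is a clean consequence of the two factorizations, and the proof is essentially bookkeeping. The only points to verify carefully are the splitting ``infimum of a sum over a product domain $=$ sum of infima'', which is legitimate precisely because neither summand couples the two coordinate blocks and the constraint does not either, and the minor remark that the hypothesis $\det(\SS{X})\neq 0$ forces the variance of $X\usub{1}$ and $\det(\SS{X\ussub{2}{N}})$ to be positive (principal submatrices of a positive-definite matrix are positive definite), so that both one-cluster costs on the right-hand side are finite and the decomposition is well posed.
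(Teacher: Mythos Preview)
Your proposal is correct and follows exactly the paper's route: the two structural facts (i) and (ii) you cite are precisely the ones established in the text immediately preceding the corollary, and the paper's one-line justification ``since the product densities can be selected individually, the optimization subject to the constraint is performed in one dimension'' is what you have unpacked as ``infimum of a separable objective over a product feasible set splits into a sum.'' Your additional remark on well-posedness via positive-definiteness of principal submatrices is a nice touch that the paper leaves implicit.
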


To complete the formula given in Corollary \ref{cor:reduction}, the {\ccl} cost function in one dimensional case has to be calculated. To facilitate the calculation we first give an equivalent form of linear constraint. 

Given $\alpha > 0$, let us denote by $\k$ the corresponding quantile:
\begin{equation}\label{palpha}
\k := \Phi_{\nor(0,1)}^{-1}(1-\alpha),
\end{equation}
where $\Phi_{\nor(m,\sigma)}(\cdot)$ denotes a cumulative distribution function of $\nor(m,\sigma)$. Making use of elementary calculations we get that:
$$
\alpha = \Phi_{\nor(m,\sigma)}(m-\k \sigma),
$$
for any $m \in \R$ and $\sigma > 0$. Then, one dimensional density $\nor(m,\sigma)$ satisfies the constraint $(\{0\},\alpha)$, iff
\begin{equation} \label{c1}
|m| \geq \k \sigma,
\end{equation}
In other words, the distance between the mean $m$ and the barrier is at least $\k \sigma$. 

To calculate the optimal one dimensional cost function, we must observe that the
cross-entropy \eqref{crossNormal} between a distribution of dataset $X \subset \R$ with mean $\mm{X}$ and standard deviation $\ss{X}$ and a Gaussian density $\nor(m,\sigma)$ equals:
\begin{equation}\label{cfg-norm}
\begin{array}{l}
h^{\times}(X \| \nor(m,\sigma)) = \frac{1}{2}\left( \frac{\ss{X}^2 + (m-\mm{X})^2}{\sigma^2} + \ln(\sigma^2) + \ln(2\pi) \right).
\end{array}
\end{equation}
The optimal parameters of $\nor(m,\sigma)$ are obtained by the minimization of the above function under the restriction \eqref{c1}:

\begin{theorem}\label{thm}
Let $X \subset \R$ be a dataset with the mean $\mm{X} \neq 0$ and the standard deviation $\ss{X} > 0$. We assume that $(\{0\},\alpha)$ denotes the linear constraint on $X$, for $\alpha >0$,  and $\k = \Phi^{-1}_{\nor(0,1)}(1-\alpha)$.

If $|\mm{X}| \geq \k\ss{X}$, then put $\m{X}{\alpha} := \mm{X}$, $\s{X}{\alpha} := \ss{X}$, otherwise
\begin{equation} \label{eq:optUnivariate}
\begin{array}{ll}
\m{X}{\alpha} := & \frac{-(\k)^2 \mm{X}+\sgn(\mm{X}) \k\sqrt{((\k)^2+4)\mm{X}^2 + 4\ss{X}^2}}{2},\\ \s{X}{\alpha} :=& \frac{|\mm{X}^\alpha|}{\k}.
\end{array}
\end{equation}

Then, the normal density $\nor(\m{X}{\alpha},\s{X}{\alpha})$ minimizes the value of $h^{\times}(X \| \nor(m,\sigma))$, given by \eqref{cfg-norm}, under the restriction $|m| \geq \k\sigma_X$ ({\ccl} cost function $E^{\alpha}_{\{0\}}(X \| \G)$).
\end{theorem}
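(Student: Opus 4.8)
The plan is to view the minimization of the right-hand side of \eqref{cfg-norm} subject to \eqref{c1} as a constrained problem in the two variables $(m,\sigma)\in\R\times(0,\infty)$, and to follow the two cases of the statement. A preliminary observation I would record is that the \emph{unconstrained} functional $h^{\times}(X\|\nor(m,\sigma))$ has a unique critical point: partial minimization in $m$ forces $m=\mm{X}$, and then, writing $t=\sigma^2$ and minimizing $t\mapsto \ss{X}^2/t+\ln t$, forces $\sigma=\ss{X}$. Hence $(\mm{X},\ss{X})$ is the unique unconstrained minimizer. In the case $|\mm{X}|\ge\k\ss{X}$ this point already satisfies the constraint \eqref{c1}, so it is also the constrained minimizer, which proves the first assertion.

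For the case $|\mm{X}|<\k\ss{X}$ (which in particular forces $\k>0$, i.e.\ a strict constraint) I would first argue that the constrained minimum is attained and lies on the boundary curve $|m|=\k\sigma$. Existence follows from coercivity of \eqref{cfg-norm} on the feasible set $S=\{(m,\sigma):\sigma>0,\ |m|\ge\k\sigma\}$: the objective tends to $+\infty$ as $\sigma\to0^+$ (because $\ss{X}^2/\sigma^2\to\infty$), as $\sigma\to\infty$ (because of the $\ln\sigma^2$ term), and as $|m|\to\infty$ along $S$ with $\sigma$ bounded (because $(m-\mm{X})^2/\sigma^2\to\infty$); thus sublevel sets are compact. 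A minimizer cannot lie in the interior $\{|m|>\k\sigma\}$, since there it would be an unconstrained local minimizer, hence equal to the critical point $(\mm{X},\ss{X})$, contradicting $|\mm{X}|<\k\ss{X}$. Therefore any minimizer satisfies $|m|=\k\sigma$.

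On the boundary I would substitute $\sigma=|m|/\k$ and parametrize the curve by the sign $s=\sgn(m)\in\{\pm1\}$ and by $u=|m|>0$, which turns twice the objective \eqref{cfg-norm} into
\[
\k^2\!\left(\frac{\ss{X}^2+\mm{X}^2}{u^2}-\frac{2s\,\mm{X}}{u}+1\right)+\ln(u^2/\k^2)+\ln(2\pi).
\]
For each fixed $u$ this is strictly smaller for $s=\sgn(\mm{X})$, so the optimal mean has the sign of $\mm{X}$; with this choice $s\,\mm{X}=|\mm{X}|$ and the remaining function of $u>0$ has derivative proportional to $u^{-3}\bigl(u^2+\k^2|\mm{X}|u-\k^2(\ss{X}^2+\mm{X}^2)\bigr)$. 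Since its constant term is negative there is exactly one positive root $u^\star=\tfrac12\bigl(-\k^2|\mm{X}|+\k\sqrt{(\k^2+4)\mm{X}^2+4\ss{X}^2}\bigr)$, and because the function blows up as $u\to0^+$ and as $u\to\infty$ this root is its global minimizer. Setting $\m{X}{\alpha}=\sgn(\mm{X})\,u^\star$, and using $\sgn(\mm{X})|\mm{X}|=\mm{X}$, gives exactly the formula \eqref{eq:optUnivariate}, together with $\s{X}{\alpha}=u^\star/\k=|\m{X}{\alpha}|/\k$.

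The genuinely computational part is the last paragraph: the one-dimensional calculus on the boundary curve and the algebraic simplification $\sqrt{\k^4\mm{X}^2+4\k^2(\ss{X}^2+\mm{X}^2)}=\k\sqrt{(\k^2+4)\mm{X}^2+4\ss{X}^2}$ into the closed form \eqref{eq:optUnivariate}. I expect the main obstacle to be the \emph{reduction to the boundary} in the second paragraph — combining the coercivity estimate with the uniqueness of the unconstrained critical point to rule out interior minima — rather than the explicit optimization, which is routine once one is on the boundary.
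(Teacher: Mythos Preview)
Your argument is correct and follows essentially the same route as the paper: reduce to the boundary $|m|=\k\sigma$ in the nontrivial case, substitute $\sigma=|m|/\k$ into \eqref{cfg-norm}, and solve the resulting one-variable problem via the quadratic $u^2+\k^2|\mm{X}|\,u-\k^2(\ss{X}^2+\mm{X}^2)=0$. Your version is in fact slightly more complete than the paper's, which simply asserts that the unique critical point is the global minimum and that ``we only need to verify what happens on the boundary'': you supply the coercivity argument for existence and you explicitly optimize over the sign $s=\sgn(m)$ before solving in $u$, whereas the paper leaves the choice of the root ``which minimizes $h(m)$'' unjustified.
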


\begin{proof}
Our aim is to find the minimum of the function
\begin{equation} \label{eq:ro}
h(m,\sigma)=h^\times(X\|\nor(m,\sigma))
\text{ under the condition }|m| \geq \k\sigma.
\end{equation}
It is obvious that the above function has the derivative zero only at its global minimum which is given by a pair $(m_X, \sigma_X)$. Consequently, if $m=\mm{X},\sigma=\ss{X}$ satisfies the constraint $|m| \geq \k\sigma$, then we have found the minimum.

In the opposite case, we only need to verify what happens on the boundary of the constraints (since we do not have any local minimum inside $|m|>\k\sigma$), that is when $\k\sigma=|m|$. 
Then, by \eqref{cfg-norm}, the function \eqref{eq:ro} simplifies to
$$
h(m) =\frac{1}{2}\big( \frac{ \ss{X}^2 + (m - \mm{X})^2}{m^2} (\k)^2 + \ln \frac{m^2}{(\k)^2} +\ln(2\pi)\big).
$$
Then
$$
h'(m)= -\frac{\ss{X}^2 + \mm{X}^2}{m^3}(\k)^2 + \frac{\mm{X}}{m^2}(\k)^2 + \frac{1}{m}.
$$
Finally, the solution of $h'(m)=0$ which minimizes the value of $h(m)$, is given by
$$
m = \frac{-(\k)^2 \mm{X}+\sgn(\mm{X})\k\sqrt{((\k)^2+4)\mm{X}^2 + 4\ss{X}^2}}{2},
$$
which completes the proof.
\end{proof}

The above analysis shows how to calculate the best model of clusters for a given partition. However, finding an optimal partition is NP-hard problem, where heuristic iterative algorithms are commonly used. One can apply a slight modification of Hartigan approach to optimize {\ccl} cost function (see \ref{app:algo} for details). Similar algorithm is used in optimization of CEC and k-means methods.

\section{Theoretical analysis} \label{thoertical}

In this section we present a theoretical analysis of {\ccl} model in its simplified form. We start with investigating the convergence of cluster parameters with respect to the leakage level $\alpha$. Then, we show that under certain assumptions a decision boundary determined by {\ccl} model with two clusters converges to the initial barrier, when $\alpha$ approaches to $0$.

In order to accommodate the constraint one-dimensional density cluster model modifies its mean and standard deviation according to Theorem \ref{thm}. The relation between $\m{X}{\alpha}$ and $\s{X}{\alpha}$ (given by \eqref{eq:optUnivariate})  is inversely proportional, i.e., the increase of $\m{X}{\alpha}$ results in the decrease of $\s{X}{\alpha}$ and vice versa (see Figure \ref{fig:asymptotic}). 
However, the most important fact is that $|\m{X}{\alpha}|$ does not grow infinitely, but converges to a finite number dependent on a data set. To prove it formally, let us first consider a one dimensional case.
\begin{figure*}[t]
\centering
\subfigure[]{\label{fig:alpha_p}\includegraphics[width=1.5in]{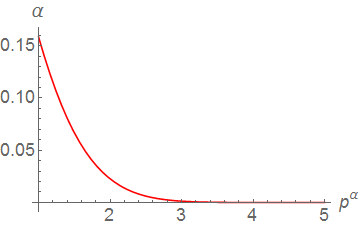}}\quad
\subfigure[]{\label{a:jeden}\includegraphics[width=1.5in]{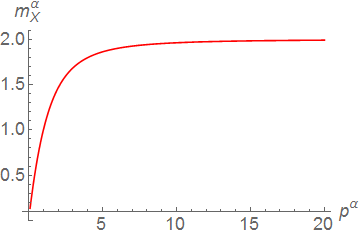}}\quad
\subfigure[]{\label{a:dwa}\includegraphics[width=1.5in]{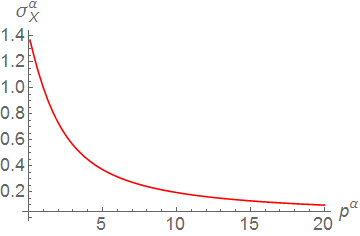}} 
\caption{The influence of the leakage level $\alpha$ on the parameters $\k, \m{X}{\alpha}$ and $\s{X}{\alpha}$.} \vspace{-0.2in}
	\label{fig:asymptotic} 
\end{figure*}

\begin{lemma} \label{malyLem}
We assume that $X \subset \R$ is a data set with a mean $\mm{X} \neq 0$ and standard deviation $\ss{X} > 0$. Let $g^\alpha = \nor(\m{X}{\alpha}, \s{X}{\alpha})$ denote a density minimizing one cluster {\ccl} cost function under the linear constraint $(\{0\}, \alpha)$, i.e, $\m{X}{\alpha}$ and $\s{X}{\alpha}$ are given by Theorem \ref{thm}.

Then:
\[ \left.
\begin{array}{ll}
\m{X}{\alpha} & \to \mm{X} + \frac{\ss{X}^2}{\mm{X}} \\
\s{X}{\alpha} & \to 0 \\
\end{array}
\right.  \text{, as } \alpha \to 0. \] 
\end{lemma}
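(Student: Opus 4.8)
The plan is to pass to the limit directly in the explicit formula \eqref{eq:optUnivariate} of Theorem \ref{thm}. The first ingredient is the elementary observation that $\k = \Phi^{-1}_{\nor(0,1)}(1-\alpha) \to +\infty$ as $\alpha \to 0^+$: since $\Phi_{\nor(0,1)}$ is a continuous, strictly increasing bijection of $\R$ onto $(0,1)$ with $\lim_{t\to+\infty}\Phi_{\nor(0,1)}(t)=1$, its inverse sends $1-\alpha\to 1^-$ to $+\infty$; moreover $\alpha\mapsto\k$ is monotone, so the limits $\alpha\to 0^+$ and $\k\to+\infty$ are interchangeable. In particular, since $\ss{X}>0$ is fixed, for every sufficiently small $\alpha$ we have $\k\ss{X}>|\mm{X}|$, so the nontrivial branch of Theorem \ref{thm} is in force and
$$\m{X}{\alpha} = \tfrac{1}{2}\Bigl(-(\k)^2\mm{X} + \sgn(\mm{X})\,\k\sqrt{((\k)^2+4)\mm{X}^2 + 4\ss{X}^2}\Bigr).$$

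Next I would reduce to the case $\mm{X}>0$: replacing $X$ by $-X$ flips the sign of both $\mm{X}$ and $\m{X}{\alpha}$, leaves $\ss{X}$ and $\k$ unchanged, and is compatible with the asserted limits, so it is harmless. Then $\sgn(\mm{X})=1$, and setting $t:=\k$ and pulling $t^2\mm{X}^2$ out of the radical gives
$$\m{X}{\alpha} = \frac{t^2\mm{X}}{2}\left(\sqrt{1+\frac{4(\mm{X}^2+\ss{X}^2)}{t^2\mm{X}^2}}-1\right).$$
Applying the expansion $\sqrt{1+u}-1=\tfrac{u}{2}+O(u^2)$ as $u\to0$ with $u=\frac{4(\mm{X}^2+\ss{X}^2)}{t^2\mm{X}^2}\to0$ yields $\m{X}{\alpha}=\frac{\mm{X}^2+\ss{X}^2}{\mm{X}}+O(t^{-2})$, hence $\m{X}{\alpha}\to\mm{X}+\frac{\ss{X}^2}{\mm{X}}$ as $t=\k\to\infty$, i.e. as $\alpha\to0$. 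This is the first claimed convergence.

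For the standard deviation, the second line of \eqref{eq:optUnivariate} gives $\s{X}{\alpha}=|\m{X}{\alpha}|/\k$; by the previous step the numerator converges to the finite constant $\bigl|\mm{X}+\ss{X}^2/\mm{X}\bigr|$ while $\k\to+\infty$, so $\s{X}{\alpha}\to0$. No serious obstacle arises; the only points requiring attention are checking that the nontrivial branch of Theorem \ref{thm} is eventually active (immediate from $\k\to\infty$ and $\ss{X}>0$ fixed), selecting the correct branch of the square root after reducing to $\mm{X}>0$, and keeping enough of the first-order term in $\sqrt{1+u}$ so that the two leading $t^2\mm{X}$ terms cancel and leave exactly $(\mm{X}^2+\ss{X}^2)/\mm{X}$.
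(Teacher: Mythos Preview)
Your proof is correct and follows essentially the same route as the paper's: both arguments observe that the nontrivial branch of Theorem~\ref{thm} is eventually active, factor the leading $(\k)^2\mm{X}^2$ out of the radical, and apply the first-order Taylor expansion of $\sqrt{1+u}$ to see the $(\k)^2\mm{X}$ terms cancel and leave $\mm{X}+\ss{X}^2/\mm{X}$, after which $\s{X}{\alpha}=|\m{X}{\alpha}|/\k\to0$ is immediate. Your reduction to $\mm{X}>0$ and use of $O$-notation streamline the presentation compared with the paper's explicit Lagrange remainder bound, but the substance is the same.
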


\begin{proof}
The proof is included in \ref{app:lem}.
\end{proof}

If we combine the above result with the fact that the mean and the covariance of $(N-1)$ dimensional density $g_{N-1}$, for a model $g=g_1 \cdot g_{N-1}$,  do not depend on the linear constraint, but are the maximum likelihood estimators of data, we get the following corollary:

\begin{corollary} \label{maleTw}
We assume that $X \subset \R^N$ is a data set, with a mean $\mm{X}$ and a covariance matrix $\SS{X}$, where $\mm{X\usub{1}} \neq 0$ and $\ss{X\usub{1}}^2=\SS{X\usub{1}}$. Let $g^\alpha = g_1^\alpha \cdot g_{N-1} \in \G_{1,N-1}$ denote a density minimizing one cluster {\ccl} cost function under the linear constraint $(\{0\}\times \R^{N-1}, \alpha)$, i.e., $g^\alpha_1 = \nor(\m{X\usub{1}}{\alpha}, \s{X\usub{1}}{\alpha})$ is given by Theorem \ref{thm} and $g_{N-1} = \nor(\m{X\ussub{2}{N}}{\alpha}, \S{X\ussub{2}{N}}{\alpha})$.

Then:
\[ 
\begin{array}{ll}
\m{X\usub{1}}{\alpha} & \to \mm{X\usub{1}} + \frac{\ss{X\usub{1}}^2}{\mm{X\usub{1}}}, \\
\s{X\usub{1}}{\alpha} & \to 0, \\
\m{X\ussub{2}{N}}{\alpha} & =\mm{X\ussub{2}{N}}, \\
\S{X\ussub{2}{N}}{\alpha} & =\SS{X\ussub{2}{N}}, \\
\end{array}
\text{ as } \alpha \to 0.
 \]
\end{corollary}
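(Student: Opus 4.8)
The plan is to deduce Corollary~\ref{maleTw} directly from Lemma~\ref{malyLem} together with the already-established separability of the one-cluster {\ccl} cost into a one-dimensional constrained part and an $(N-1)$-dimensional unconstrained part. The key observation is that Corollary~\ref{cor:reduction} tells us that for a density $g = g_1 \cdot g_{N-1} \in \G_{1,N-1}$ the minimization decouples: $g_1$ is chosen to minimize the one-dimensional constrained cost $E^\alpha_{\{0\}}(X\usub{1}\|\G_1)$, while $g_{N-1}$ is chosen to minimize the \emph{unconstrained} cross-entropy $h^\times(X\ussub{2}{N}\|\G_{N-1})$. Hence the two blocks of parameters can be treated independently, and the asymptotics of each block as $\alpha\to 0$ can be read off from the corresponding one-dimensional analysis.

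First I would treat the $(N-1)$-dimensional block. Since the constraint $(\{0\}\times\R^{N-1},\alpha)$ only involves the first coordinate (as verified in the displayed computation preceding Corollary~\ref{cor:reduction}), the optimal $g_{N-1}$ is simply the maximum-likelihood Gaussian fit to $X\ussub{2}{N}$, independently of $\alpha$. By \eqref{crossNormal} and \eqref{meanCovN}, the minimizer of $h^\times(X\ussub{2}{N}\|\nor(m,\Sigma))$ over all Gaussians is attained at $m = \mm{X\ussub{2}{N}}$ and $\Sigma = \SS{X\ussub{2}{N}}$ (this is the classical fact that the MLE equals the sample mean and sample covariance, and under our standing nonsingularity assumption the minimizer is unique). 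Therefore $\m{X\ussub{2}{N}}{\alpha} = \mm{X\ussub{2}{N}}$ and $\S{X\ussub{2}{N}}{\alpha} = \SS{X\ussub{2}{N}}$ for \emph{every} $\alpha>0$, which gives the last two lines of the claim trivially — there is no limit to take.

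Next I would treat the one-dimensional block. Here $g_1^\alpha = \nor(\m{X\usub{1}}{\alpha},\s{X\usub{1}}{\alpha})$ is exactly the density produced by Theorem~\ref{thm} applied to the dataset $X\usub{1}\subset\R$, which has mean $\mm{X\usub{1}}\neq 0$ (by hypothesis) and standard deviation $\ss{X\usub{1}}>0$ (again by the nonsingularity assumption, since $\ss{X\usub{1}}^2 = \SS{X\usub{1}}$ is a diagonal entry of a positive definite matrix). Lemma~\ref{malyLem}, applied verbatim to $X\usub{1}$ in place of $X$, then yields $\m{X\usub{1}}{\alpha}\to \mm{X\usub{1}} + \ss{X\usub{1}}^2/\mm{X\usub{1}}$ and $\s{X\usub{1}}{\alpha}\to 0$ as $\alpha\to 0$, which are precisely the first two lines of the corollary. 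Assembling the four statements finishes the proof.

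There is essentially no hard step: the entire content is the decoupling, which has already been proved, plus a direct invocation of Lemma~\ref{malyLem} and of the standard MLE characterization of the Gaussian fit. The only point deserving a sentence of care is verifying that the hypotheses of Lemma~\ref{malyLem} hold for the one-dimensional marginal $X\usub{1}$ — namely $\mm{X\usub{1}}\neq 0$, which is assumed, and $\ss{X\usub{1}}>0$, which follows from $\det(\SS{X})\neq 0$ — and that the $(N-1)$-dimensional part genuinely does not see the constraint, which is the computation displayed just before Corollary~\ref{cor:reduction}. So the ``obstacle'' is purely bookkeeping: matching the notation of Lemma~\ref{malyLem} and Theorem~\ref{thm} to the marginal coordinates of the multivariate dataset.
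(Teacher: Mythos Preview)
Your proposal is correct and follows exactly the approach the paper takes: the paper does not give a formal proof of this corollary but simply remarks (in the sentence preceding the statement) that it follows by combining Lemma~\ref{malyLem} with the fact that the $(N-1)$-dimensional component is unaffected by the constraint and is therefore given by the MLE. Your write-up is a faithful and slightly more detailed expansion of that one-line justification, including the bookkeeping check that the hypotheses of Lemma~\ref{malyLem} are met for $X\usub{1}$.
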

The Figure \ref{fig:border} shows the influence of the change of the parameter $\alpha$ on the form of resulting density function.

\begin{figure}[t]
\centering
\includegraphics[width=3.5in]{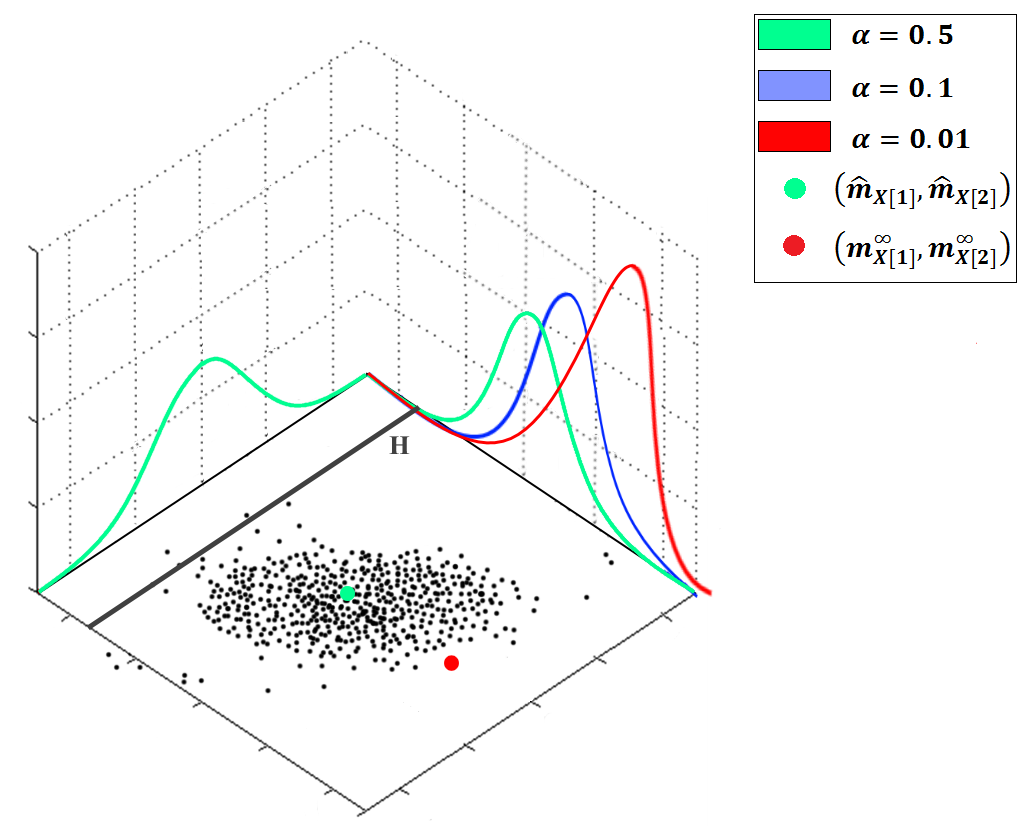}
\caption{The change of $\alpha$ affects only  the form of a density model which is orthogonal to the barrier. Its mean converges to the limiting value (marked with red dot) given by Corollary \ref{maleTw} while its standard deviation grows to infinity.}\vspace{-0.2in}
	\label{fig:border} 
\end{figure}

We now discuss the relations between an initial decision boundary defined by a hyperplane $H$ and a splitting determined by {\ccl} model. For a simplicity, we consider the case of only two clusters. We assume that the hyperplane $H=\{0\} \times \R^{N-1}$ defines the classification rule \eqref{class}:
$$
\cl_H(x) := \sgn(x\ssub{1}) \text{, for } x \in \R^N,
$$
which splits a data set $X \subset \R^N$ into two subsets $X_\pm$ \eqref{eq:division}.

Let the leakage level $\alpha$ be fixed. In a simplified form of {\ccl} model with only two clusters, we assume that density models $p_\pm g^\alpha_\pm$ are chosen so that to maximize one cluster cost functions of $X_\pm$, respectively (not the overall cost of clustering). In the other words, a class density is selected ignoring the influence of objects which belong to the opposite class. Then any incoming object $x \in X$ can be classified to one of two clusters by calculating
$$
\cl_\alpha(x) := \sgn (p_+ g_+^\alpha(x) - p_-g_-^\alpha(x)) = \pm 1,
$$
We will show that a decision boundary determined by such model converges to $H$, as the leakage level $\alpha$ approaches to 0, i.e., 
$$
\cl_\alpha(x) \xrightarrow{\alpha \to 0} \cl_H(x),
$$ 
which is illustrated in Figure \ref{conv}:

\begin{figure*}[t]
\centering
\subfigure[$\alpha = 0.159$]{\label{a:jeden}\includegraphics[width=1.6in]{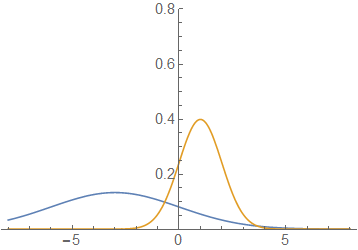}}\quad
\subfigure[$\alpha = 0.043$]{\label{a:dwa}\includegraphics[width=1.6in]{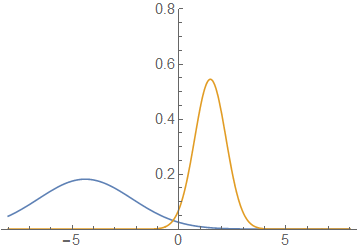}}\quad
\subfigure[$\alpha = 0.001$]{\label{a:dwa}\includegraphics[width=1.6in]{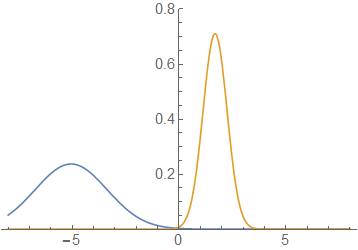}}
\caption{Convergence of {\ccl} model for $\alpha \to 0$.}
\label{conv}\vspace{-0.2in}
\end{figure*}


\begin{theorem} \label{mainTh}
We assume that $X \subset \R^N$ is a data set and $H = \{0\} \times \R^{N-1}$ defines a hyperplane in $\R^N$ dividing $X$ into two classes $X_-,X_+$, where $X_\pm \neq \emptyset$. Let $g^\alpha_\pm \in \G_{1,N-1}$ denote two densities minimizing one cluster {\ccl} functions of $X_\pm$ under the constraint $(H,\alpha)$, respectively.

Then, there exists a constant $C > 0$ such that
$$
\cl_\alpha(x) \to \cl_H(x) \text{, as } \alpha \to 0,
$$
for every $x \in \R^N$ satisfying $dist(x, H) \leq C$.
\end{theorem}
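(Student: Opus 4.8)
The plan is to reduce the two-cluster comparison to a one-dimensional problem along the direction orthogonal to $H$, and then to quantify, via Theorem~\ref{thm} and Lemma~\ref{malyLem}, how fast the two marginal Gaussians concentrate as $\alpha\to0$. Concretely, first I would fix notation: since $g^\alpha_\pm\in\G_{1,N-1}$, write $g^\alpha_\pm=g^\alpha_{\pm,1}\cdot g_{\pm,N-1}$, where $g^\alpha_{\pm,1}=\nor(\m{X_\pm\usub{1}}{\alpha},\s{X_\pm\usub{1}}{\alpha})$ is the optimal one-dimensional model of Theorem~\ref{thm}, and, by Corollary~\ref{maleTw}, the factor $g_{\pm,N-1}=\nor(\mm{X_\pm\ussub{2}{N}},\SS{X_\pm\ussub{2}{N}})$ does not depend on $\alpha$. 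Because $X_+\neq\emptyset$ has all first coordinates positive, $\mm{X_+\usub{1}}>0$, and likewise $\mm{X_-\usub{1}}<0$, while $\ss{X_\pm\usub{1}}>0$ by nondegeneracy. Put $\mu_\pm:=\mm{X_\pm\usub{1}}+\ss{X_\pm\usub{1}}^2/\mm{X_\pm\usub{1}}$, so $\mu_+>0$, $\mu_-<0$; by Lemma~\ref{malyLem}, $\m{X_\pm\usub{1}}{\alpha}\to\mu_\pm$ and $\s{X_\pm\usub{1}}{\alpha}\to0$ as $\alpha\to0$. Finally, since $\k=\Phi^{-1}_{\nor(0,1)}(1-\alpha)\to+\infty$ while the data statistics are fixed, the constraint $|m|\ge\k\sigma$ is active for all small $\alpha$, so Theorem~\ref{thm} gives the exact identity $\s{X_\pm\usub{1}}{\alpha}=|\m{X_\pm\usub{1}}{\alpha}|/\k$.

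Next I would write out the decision quantity. Fix $x\in\R^N$ with $x\ssub{1}\neq0$ (equivalently $x\notin H$). Then $p_-g^\alpha_-(x)>0$, so $\cl_\alpha(x)=\sgn\big(\tfrac{p_+g^\alpha_+(x)}{p_-g^\alpha_-(x)}-1\big)$, and since the $(N-1)$-dimensional factors carry no $\alpha$,
$$
\frac{p_+g^\alpha_+(x)}{p_-g^\alpha_-(x)}=\rho(x)\cdot\frac{g^\alpha_{+,1}(x\ssub{1})}{g^\alpha_{-,1}(x\ssub{1})},\qquad
\rho(x):=\frac{p_+\,g_{+,N-1}(x\sssub{2}{N})}{p_-\,g_{-,N-1}(x\sssub{2}{N})}>0
$$
is a fixed positive constant, independent of $\alpha$. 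Plugging $\s{X_\pm\usub{1}}{\alpha}=|\m{X_\pm\usub{1}}{\alpha}|/\k$ into the one-dimensional Gaussian density, a direct computation gives, with $t=x\ssub{1}$,
$$
\ln \frac{g^\alpha_{+,1}(t)}{g^\alpha_{-,1}(t)}=\ln \frac{|\m{X_-\usub{1}}{\alpha}|}{\m{X_+\usub{1}}{\alpha}}+\frac{(\k)^2}{2}\,A_\alpha(t),\qquad A_\alpha(t)=t\big(2B_\alpha+tD_\alpha\big),
$$
where $B_\alpha:=\tfrac{1}{\m{X_+\usub{1}}{\alpha}}-\tfrac{1}{\m{X_-\usub{1}}{\alpha}}$ and $D_\alpha:=\tfrac{1}{(\m{X_-\usub{1}}{\alpha})^2}-\tfrac{1}{(\m{X_+\usub{1}}{\alpha})^2}$. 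Here $B_\alpha>0$ (a sum of two positive terms, since $\m{X_+\usub{1}}{\alpha}>0>\m{X_-\usub{1}}{\alpha}$), $B_\alpha\to B_0:=\tfrac1{\mu_+}-\tfrac1{\mu_-}>0$, while $D_\alpha$ stays bounded and the logarithmic term converges to $\ln(|\mu_-|/\mu_+)$.

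Finally I would fix the constant and conclude. Choose $\alpha_0>0$ so that for $\alpha<\alpha_0$ one has $B_\alpha\ge B_0/2$, $|D_\alpha|\le D_1$ for some fixed $D_1>0$, and $\big|\ln(|\m{X_-\usub{1}}{\alpha}|/\m{X_+\usub{1}}{\alpha})\big|\le L$ for some fixed $L$; then set $C:=B_0/(2D_1)$. For all $t$ with $|t|\le C$ and all $\alpha<\alpha_0$ we get $2B_\alpha+tD_\alpha\ge B_0/2>0$, hence $\sgn A_\alpha(t)=\sgn t$ and $|A_\alpha(t)|\ge\tfrac{B_0}{2}|t|$. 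Now fix $x$ with $dist(x,H)=|x\ssub{1}|\le C$ and $x\ssub{1}\neq0$: since $\k\to+\infty$, the term $\tfrac{(\k)^2}{2}|A_\alpha(x\ssub{1})|\ge\tfrac{(\k)^2B_0}{4}|x\ssub{1}|$ eventually dominates $L$, so $\ln\frac{g^\alpha_{+,1}(x\ssub{1})}{g^\alpha_{-,1}(x\ssub{1})}$ acquires the sign of $x\ssub{1}$ and tends to $+\infty$ if $x\ssub{1}>0$, to $-\infty$ if $x\ssub{1}<0$. Multiplying by the fixed positive $\rho(x)$, the ratio $\tfrac{p_+g^\alpha_+(x)}{p_-g^\alpha_-(x)}$ tends to $+\infty$ or to $0$ accordingly, so for all small $\alpha$, $\cl_\alpha(x)=\sgn(x\ssub{1})=\cl_H(x)$, which is the assertion.

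I expect the crux to be the middle step: the two Mahalanobis terms $\tfrac{(t-\m{X_\pm\usub{1}}{\alpha})^2}{2(\s{X_\pm\usub{1}}{\alpha})^2}$ both blow up at the same rate $(\k)^2$, so one must isolate the exact coefficient $A_\alpha(t)$, check that its leading (linear) part carries the correct sign, and bound the quadratic correction $t^2D_\alpha$ \emph{uniformly in $\alpha$}. This uniform bound is exactly what forces the restriction $dist(x,H)\le C$: outside this band the quadratic term can reverse the sign of $A_\alpha$, producing a spurious second intersection of the two marginal densities and breaking the convergence.
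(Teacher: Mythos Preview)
Your proof is correct and follows essentially the same route as the paper's: both factor $g^\alpha_\pm$ into the $\alpha$-independent $(N-1)$-dimensional piece and the one-dimensional piece, substitute the active-constraint identity $\sigma_\pm=|m_\pm|/\k$, and show that the resulting log-ratio is $(\k)^2$ times a quantity whose sign agrees with $\sgn(x_1)$ near $H$. The only cosmetic differences are that the paper isolates the one-dimensional computation as a separate Lemma~\ref{lemLast} and solves the quadratic exactly to obtain the explicit constant $C=2\big(\mm{+}+\ss{+}^2/\mm{+}\big)$, whereas you bound the quadratic correction $tD_\alpha$ directly to get a (non-explicit) $C=B_0/(2D_1)$, which also handles both sides of $H$ symmetrically.
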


The above convergence holds only for instances, which are localized within the margin of size $C$ around the barrier $H$. This is a natural situation occurring in every Gaussian discrimination. To prove the above theorem we first consider one dimensional situation, where an exact value of constant $C$ will be given.

\begin{lemma} \label{lemLast}
We assume that $X_\pm \subset \R_\pm$ are two non empty sets with means $\mm{\pm}$ and standard deviations $\ss{\pm}$. Let $g^\alpha_\pm \in \G_1$ denote two densities minimizing one cluster {\ccl} cost functions of $X_\pm$ under the linear constraint $(\{0\},\alpha)$, respectively.

Then, 
$$
\ln g^\alpha_+(x) -\ln g^\alpha_-(x) \to +\infty \text{, as } \alpha \to \infty,
$$
for every $0<x \leq 2 \left( \mm{+} + \frac{\ss{+}^2}{\mm{+}} \right)$. 
\end{lemma}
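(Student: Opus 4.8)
The plan is to compute the log-ratio $\ln g^\alpha_+(x) - \ln g^\alpha_-(x)$ explicitly, use the boundary identity $\s{X}{\alpha} = |\m{X}{\alpha}|/\k$ from Theorem~\ref{thm} to eliminate the standard deviations, and then pass to the limit using the asymptotics of Lemma~\ref{malyLem}. (The regime of interest is $\alpha\to 0$, not $\alpha\to\infty$ as literally written in the statement: it is exactly when $\k=\Phi_{\nor(0,1)}^{-1}(1-\alpha)\to\infty$ that the linear constraint becomes active and Lemma~\ref{malyLem} applies.)

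First I would record the elementary facts. Since $X_\pm\subset\R_\pm$ are nonempty (and, as throughout, nondegenerate so that $\ss{\pm}>0$), we have $\mm{+}>0$ and $\mm{-}<0$; write $\mu_+:=\mm{+}+\ss{+}^2/\mm{+}>0$ and $\mu_-:=\mm{-}+\ss{-}^2/\mm{-}<0$ for the two limits furnished by Lemma~\ref{malyLem}. Denote by $m_\pm:=\m{X_\pm}{\alpha}$ and $\sigma_\pm:=\s{X_\pm}{\alpha}$ the parameters of $g^\alpha_\pm$ produced by Theorem~\ref{thm}. Because $\k\to\infty$ while $\ss{+}$ and $\ss{-}$ are fixed, for all sufficiently small $\alpha$ we have $|\mm{\pm}|<\k\ss{\pm}$, so Theorem~\ref{thm} puts us in the constrained branch, giving $\sigma_\pm=|m_\pm|/\k$. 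Substituting the Gaussian density and this relation into the log-ratio, and cancelling the common factor $\k$ in $\sigma_-/\sigma_+$, one obtains
\[
\ln g^\alpha_+(x) - \ln g^\alpha_-(x)
= \ln\frac{|m_-|}{|m_+|}
+ \frac{\k^2}{2}\left[\Big(1-\frac{x}{m_-}\Big)^{2} - \Big(1-\frac{x}{m_+}\Big)^{2}\right].
\]

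Then I would let $\alpha\to 0$ termwise. By Lemma~\ref{malyLem}, $m_+\to\mu_+>0$ and $m_-\to\mu_-<0$, so $\ln(|m_-|/|m_+|)\to\ln(|\mu_-|/\mu_+)$ remains bounded, while the bracket converges to $\delta:=(1-x/\mu_-)^2-(1-x/\mu_+)^2$. The crux is to check $\delta>0$ for every admissible $x$: since $x>0$ and $\mu_-<0$ we have $1-x/\mu_->1$, hence $(1-x/\mu_-)^2>1$; and since $0<x\le 2\mu_+$ we have $-1\le 1-x/\mu_+<1$, hence $(1-x/\mu_+)^2\le 1$. Thus for all small enough $\alpha$ the bracket exceeds $\delta/2>0$, and since $\k^2\to\infty$ the term $\tfrac{\k^2}{2}\cdot\tfrac{\delta}{2}$ dominates the bounded logarithmic term, so the whole expression tends to $+\infty$.

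The computations are routine; the single point that needs care is the sharp use of the hypothesis $x\le 2\mu_+$, which is precisely what keeps $1-x/\mu_+$ inside $[-1,1)$ so that $(1-x/\mu_+)^2\le 1$ — at the endpoint $x=2\mu_+$ the strict inequality $\delta>0$ survives only because $\mu_-$ is negative, making $(1-x/\mu_-)^2$ strictly exceed $1$. I would also state at the outset that Lemma~\ref{malyLem} is applied separately to $X_+$ and to $X_-$ (each satisfying its hypotheses, a nonzero mean and a positive standard deviation), since the entire argument rests on the two limits $m_\pm\to\mu_\pm$.
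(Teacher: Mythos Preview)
Your proof is correct and follows essentially the same route as the paper: both use the boundary relation $\sigma_\pm=|m_\pm|/\k$ from Theorem~\ref{thm}, expand the log-ratio so that a single term carries the factor $(\k)^2$, invoke Lemma~\ref{malyLem} for the limits $m_\pm\to\mu_\pm$, and verify that the remaining factor is strictly positive precisely for $0<x\le 2\mu_+$. Your factorization $(1-x/m_-)^2-(1-x/m_+)^2$ is algebraically equivalent to the paper's polynomial bracket but makes the sign check a bit more transparent; your remark that the intended limit is $\alpha\to 0$ (equivalently $\k\to\infty$) is also correct.
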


\begin{proof}
The proof is included in \ref{app:lem1}.
\end{proof}

In one dimensional case the constant $C$ from Theorem \ref{mainTh} equals $C= 2 \left( \mm{+} + \frac{\ss{+}^2}{\mm{+}} \right)$. Below we complete the proof of our main result:

\begin{proof} (of Theorem \ref{mainTh})
Let $x \in \R^N$ be such that $0<x\ssub{1}<2 \left( \mm{+} + \frac{\ss{+}^2}{\mm{+}} \right)$, where $\mm{+}, \ss{+}$ are the mean and standard deviation of $X_+\usub{1}$. In other words, we assume that $x$ lies at the right side of a decision boundary. We assume that the optimal {\ccl} densities $g^\alpha_\pm \in \G_{1,N-1}$ for $X_\pm$ equal
$$
g^\alpha_\pm = (g^\alpha_\pm)_1 \cdot (g^\alpha_\pm)_{N-1} 
$$
with the priors $p_\pm$. We will show that 
\begin{equation}\label{ineq}
\ln p_+g^\alpha_+(x) > \ln p_-g^\alpha_-(x),
\end{equation} 
for sufficiently small $\alpha > 0$. 

Since
$$
\begin{array}{ll}
\ln g^\alpha_\pm(x) & = \ln (g^\alpha_\pm)_1(x\ssub{1}) + \ln (g_\pm^\alpha)_{N-1}(x\sssub{2}{N}) + \ln p_\pm,
\end{array}
$$
the formula \eqref{ineq} can be rewritten as 
\begin{equation}\label{koniec}
\begin{array}{l}
\ln (g^\alpha_+)_1(x\ssub{1})  - \ln (g^\alpha_-)_1(x\ssub{1}) +\ln (g^\alpha_+)_{N-1}(x\sssub{2}{N}) \\
- \ln (g^\alpha_-)_{N-1}(x\sssub{2}{N}) + \ln p_+ - \ln p_-> 0.
\end{array}
\end{equation}

Making use of Lemma \ref{lemLast}, we have 
$$
\ln (g^\alpha_+)_1(x\ssub{1})  - \ln (g^\alpha_-)_1(x\ssub{1}) \to +\infty \text{, as } \alpha \to 0.
$$
Because 
$$
|\ln (g^\alpha_+)_{N-1}(x\sssub{2}{N}) - \ln (g^\alpha_-)_{N-1}(x\sssub{2}{N})  + \ln p_+ - \ln p_-| < \infty,
$$ 
the LHS of \eqref{koniec} can be arbitrary large, when $\alpha \to 0$, which completes the proof.
\end{proof}

\section{Experiments} \label{sec:exp}

We tested our method on sample examples retrieved from UCI repository \cite{asuncion2007uci} and one real data set of chemical compounds \cite{warszycki2013linear}. We verified the quality of the clustering model and demonstrated that {\ccl} can be useful in discovering natural subgroups given a partial knowledge about two class division. We also used {\ccl} to extend linear boundary between clusters obtained by projection pursuit technique to nonlinear one. We also show its application on real data set of chemical compounds. We compared its performance with related model-based clustering techniques. 

\subsection{Quality of the model} \label{sec:ex1}

In this experiment we consider a scenario, where every instance is assigned to one of two classes based on the value of a fixed decision support function. {\ccl} builds a clustering model which preserves the information of class membership in a sense that every cluster density belongs to one of two classes with a probability greater than $(1-\alpha)$ \eqref{eq:model}. We want to verify the quality of such model and compare it with the results produced by related model-based clustering techniques, which however do not allow for a direct specification of the leakage level.

To compare the quality of clustering models we applied Bayesian Information Criterion (BIC), which is a standard criterion for model selection \cite{fraley1998many}.  
The lower the BIC is the better the model is.

\begin{table}[t]
\caption{Regression UCI data sets and median number of clusters returned by {\ccl}.}
\label{tab:reg}	
\setlength{\arrayrulewidth}{0.1mm}
\setlength{\tabcolsep}{4pt}
\renewcommand{\arraystretch}{1.1}
\centering
\begin{tabular}{lcccc}
 & Airfoil & Forest & Music$^+$ & Stock\\ \hline
\# Instances & 1502 &  517 & 1059 & 536 \\ 
\# Features & 5 & 12 & 5 & 7\\ 
\# Clusters & 5 & 3 & 5 & 7\\ 
\hline
\end{tabular}	

\footnotesize $^+$ PCA was used to reduce a dimension of data
\end{table}

We used four regression UCI examples, which are summarized in Table \ref{tab:reg}. A dependent (output) variable was treated as a decision support function, which determines a decision boundary $H$. More precisely, if $X \times Y$ is a data set, where $X \subset \R^N$ contains explanatory variables and $Y \subset \R$ includes dependent variable, then a decision boundary $H$ is defined by $H=X \times \{\mathrm{median}(Y)\}$, where $\mathrm{median}(Y)$ is the median of attribute $Y$.

The effects of {\ccl} were compared with those obtained by classical GMM method, which ignores the presence of existing decision boundary. To introduce a decision boundary to the model, we also considered the second variant of GMM (which is referred to as GMM$_H$): given a linear hyperplane $H$, which divides a data set $X$ into two regions $X_-, X_+$ (see \eqref{eq:division}), GMM$_H$ is defined as follows:
\begin{itemize}
\item GMM is applied to $X_-$ and $X_+$ separately, which give two models $g_\pm=p_1^\pm g_1^\pm+\ldots+p_k^\pm g_k^\pm$.
\item These models are combined into a single one by $g=\frac{|X_-|}{|X|} g_- + \frac{|X_+|}{|X|} g_+$. 
\item Finally, every point $x \in X$ is assigned to the most probable cluster by calculating $\frac{|X_\pm|}{|X|} p_i^\pm g_i^\pm(x)$.
\end{itemize}
Analogical strategies were also applied to CEC. Both variants of GMM use general Gaussian densities to model clusters distributions, while CEC-based methods use the same densities as {\ccl} method, i.e. densities from the family $\G_{1,N-1}$.

\begin{figure*}[t]
\centering
\subfigure[Airfoil]{\label{fig:h}\includegraphics[width=2.3in]{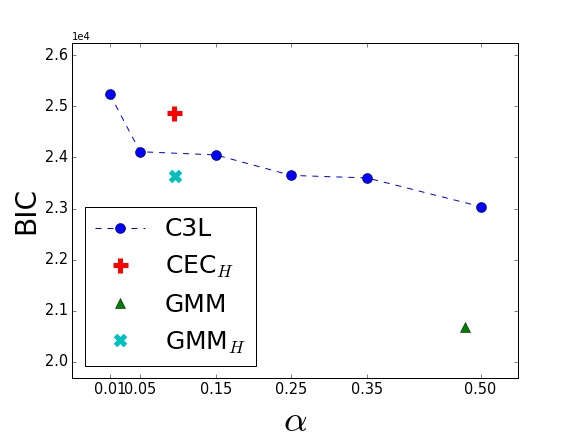}}
\subfigure[Forest]{\label{fig:ari}\includegraphics[width=2.3in]{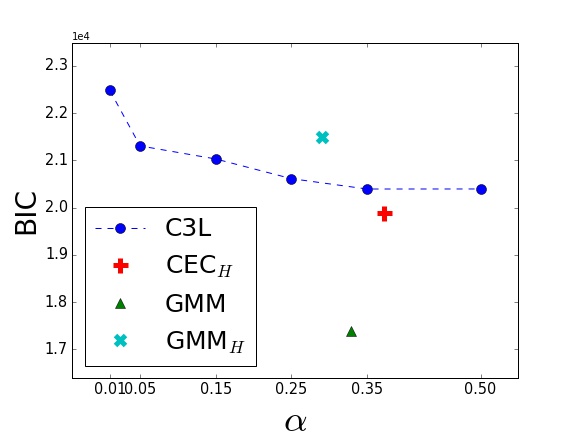}}
\subfigure[Music]{\label{fig:ll}\includegraphics[width=2.3in]{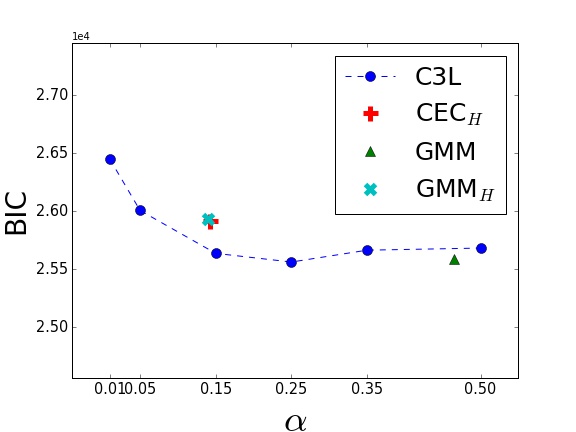}}
\subfigure[Stock]{\label{fig:ari}\includegraphics[width=2.3in]{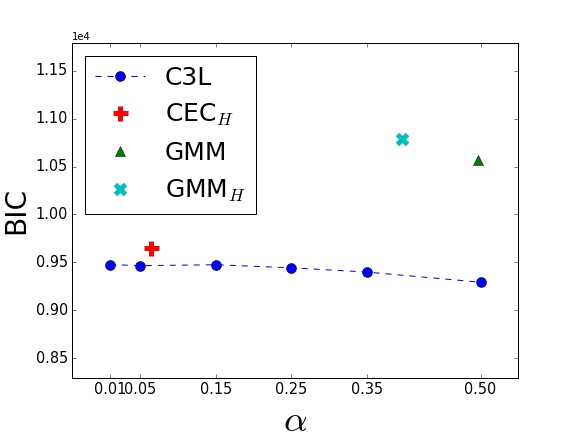}}
\caption{Quality of the clustering models measured by Bayesian Information Criterion (BIC).}\vspace{-0.2in}
	\label{fig:reg} 
\end{figure*}

We ran each method on $X \times Y$ with a decision boundary $H$. To investigate the influence of the leakage level on the clustering effects of {\ccl}, six leakage levels were considered, $\alpha \in \{0.01, 0.05, 0.15, 0.25, 0.35, 0.5\}$. Since {\ccl} and CEC$_H$ internally find the final number of clusters, we ran them with 10 groups, while other methods used the median number of groups returned by {\ccl} calculated over these six leakage levels.

The results presented in the Figure \ref{fig:reg} prove that the quality of {\ccl} model improves as the leakage level is increased. This is a natural behavior, because lower values of $\alpha$ indicate higher restrictions on the clusters models. Since other methods do not control the inconsistency with classification, we measured their resulting leakage levels and marked returned BIC values. One can observe that in most cases CEC$_H$ and GMM$_H$ gave worse BIC than {\ccl} method for corresponding leakage levels. It follows from the fact that {\ccl} optimizes the model on the entire data, while CEC$_H$ and GMM$_H$ search for the optimal solutions in each half space individually. Most importantly, since we are not able to directly control the inconsistency level of these methods, they might lead to high inconsistency with initial classification, even if each model is optimized on a separate class (see Forest and Stock data sets). Similar argument holds for classical GMM -- although it should allow for optimal fit to the data, it does not take into account the decision boundary between classes.

\subsection{Subgroups detection}\label{sec:ex2}

\begin{table}
\caption{UCI data sets used in subgroups detection experiment. Last two rows show which reference groups were used for creating classes $Y_-$ and $Y_+$.}
\vspace{0.1in}
\label{tab:clust}	
\setlength{\arrayrulewidth}{0.1mm}
\setlength{\tabcolsep}{3pt}
\renewcommand{\arraystretch}{1.1}
\centering
\begin{tabular}{ccccc}
 & Balance & Segmentation$^+$ & User & Wine \\ \hline
\# Instances & 625 & 210 & 258 & 178 \\ 
\# Features & 4 & 5 & 5 & 13\\ 
\# Clusters  & 3 & 7 & 4 & 3 \\ 
$Y_-$ & \{1,2\} & \{1,3,4,5,7\} & \{1,2\} & \{1,2\} \\ 
$Y_+$ & \{3\} & \{2,6\} & \{3,4\} & \{3\}\\
\hline
\end{tabular}	

\footnotesize $^+$ PCA was used to reduce a dimension of data
\end{table}

Decision boundary usually delivers some meaningful information about true structure of clusters. For example, the User Knowledge Modeling data set \cite{kahraman2013development} distinguishes users with very low, low, middle and high knowledge about a given subject. If we knew a coarse separation of the users into two basic classes, \{very low, low\} and \{middle, high\}, it should be easier to detect their exact level of knowledge. 
In this experiment, we want to verify how the information of binary classification influences the clustering results.

To simulate the above scenario, where a decision boundary is closely related with the expected clustering structure, we considered four UCI data sets. For each one we applied the following procedure:
\begin{itemize}
\item Given $k$ reference groups $Y_1,\ldots,Y_k$ of a data set $X \subset \R^N$ we created two classes $Y_-,Y_+$ by merging selected groups together. 
\item We trained SVM classifier on 15\% elements drawn randomly from $Y_-$ and $Y_+$, which induced a linear decision boundary $H$ dividing $X$ into two classes $X_-$ and $X_+$.
\end{itemize}
The goal is to discover the reference grouping $Y_1, \ldots, Y_k$. Table \ref{tab:clust} contains detailed information about data sets and classes $Y_-,Y_+$. 

\begin{table}
\caption{Normalized mutual information for UCI datasets.}
\vspace{0.1in}
\label{tab:nmi}	
\setlength{\arrayrulewidth}{0.1mm}
\setlength{\tabcolsep}{4pt}
\renewcommand{\arraystretch}{1.1}
\centering
\begin{tabular}{lcccc}
Method & Balance  & Segmentation & User &  Wine \\ \hline
{\ccl}$_{0.01}$ & \bf 0.50 & \bf 0.62 & 0.53 & 0.50 \\
{\ccl}$_{0.05}$ & 0.44 & 0.60 & 0.49 & \bf 0.51 \\
CEC$_H$ & 0.48 & 0.59 & 0.47 & 0.34 \\
CEC & 0.03 & 0.58 & 0.25 & 0.46 \\
c-GMM & 0.20 & 0.54 & 0.56 & 0.47 \\
GMM$_H$ & 0.49 & 0.56 & \bf 0.66 & 0.40 \\
GMM & 0.07 & 0.58 & 0.36 & 0.45\\
\hline \vspace{-0.3in}
\end{tabular}	
\end{table}


Given such prepared data sets, we ran all the methods applied in previous experiment. Additionally, we used a version of GMM enhanced with pairwise cannot-link constraints, which is referred as c-GMM \cite{shental2004computing}. Cannot-link constraints specify the pairs of elements that should not be included into the same group, which suits perfectly to this clustering task\footnote{The introduction of must-link constraints is not suitable in this case.}. To generate a set of pairwise constraints containing similar knowledge to the decision boundary $H$, we went over all pairs of labeled data points and generated a cannot-link constraint, if one element belonged to $Y_-$ and the second belonged to $Y_+$.

To compare {\ccl} with other methods we used only two leakage levels $\alpha = 0.01$ and $\alpha = 0.05$. This choice was motivated by a typical approach used in hypothesis testing, where the significance level is commonly set to $0.01$ or $0.05$. {\ccl}, CEC and CEC$_H$ were initialized with twice the correct number of clusters (and they were allowed to reduce redundant groups). GMM-based methods were run with the correct numbers of clusters and can thus be expected to perform better than {\ccl}, especially that GMMs describe the clusters by arbitrary Gaussian distributions. The similarity between the obtained clusterings and the ground truth partition $Y_1,\ldots,Y_k$ was evaluated using Normalized Mutual Information (NMI) \cite{ana2003robust}. NMI is bounded from the above by the value 1, which is attained for identical partitions.

The results presented in Table \ref{tab:nmi} show that {\ccl} performed better than other methods except the User Knowledge Modeling data set, where GMM$_H$ gave very good result. This confirms that working with the entire data set is usually more profitable than finding subgroups in each half space individually (as GMM$_H$ and CEC$_H$ do). Moreover, lower leakage level $\alpha =0.01$ usually led to higher NMI than $\alpha = 0.05$. It might follow from the fact that a decision boundary was constructed based on correctly labeled data and, therefore, it was very accurate.

To investigate the influence of the accuracy of decision boundary on the clustering results, we used 15\% of data drawn from $Y_-,Y_+$ and assigned incorrect labels to a fixed percentage of them (we considered 0\%, 10\%, 20\% and 30\% of erroneous labels). The more labels were misspecified the worse a decision boundary should be. 

Table \ref{tab:cor} presents the correlation between the accuracy of decision boundary and the normalized mutual information of clustering. One can observe that CEC$_H$ and GMM$_H$ are more sensitive to incorrect decision boundary than both parameterizations of {\ccl}. In consequence, these methods should not be used if there is a risk of unreliable information of class labels. Higher robustness of {\ccl} could be explained by the fact that this model has an access to the entire data set, not only to its part (as GMM$_H$ and CEC$_H$).

\begin{table}
\caption{Correlation between the accuracy of decision boundary and clustering results.}
\vspace{0.1in}
\label{tab:cor}	
\setlength{\arrayrulewidth}{0.1mm}
\setlength{\tabcolsep}{4pt}
\renewcommand{\arraystretch}{1.1}
\centering
\begin{tabular}{lcccc}
Method & Balance & Segmentation & User & Wine \\ \hline
{\ccl}$_{0.01}$ & 0.95  & 0.64 & 0.98 & 0.45 \\
{\ccl}$_{0.05}$ & 0.90  & 0.34 & 0.94 & 0.36 \\
CEC$_H$ & 0.99 & 0.68 & 0.96 & 0.76 \\
GMM$_H$ & 0.99 & 0.98 & 0.88 & 0.80 \\
\hline \vspace{-0.3in}
\end{tabular}	
\end{table}

\subsection{Improving clusters boundaries} \label{sec:ex3}

Projection pursuit (PP) is a technique used for analyzing multivariate data by finding its interesting low dimensional views. One dimensional projection is a common choice, which was widely analyzed in the literature \cite{hou2014re, loperfido2015vector}. Projections are usually determined by maximizing non-gaussianity. Pena and Prieto \cite{pena2001cluster} showed that minimizing the kurtosis coefficient implies maximizing the bimodality of the projections, which in consequence is useful for detecting clusters. Since clustering in one dimensional subspaces can only generate linear decision boundaries between clusters, PP cannot discover complex data patterns. We will show that given linear separation of data obtained by PP, the use of {\ccl} allows to detect more accurate shapes of clusters.

\begin{figure}[t]
\centering
\subfigure[ ]{\label{fig:PP}\includegraphics[width=1.6in]{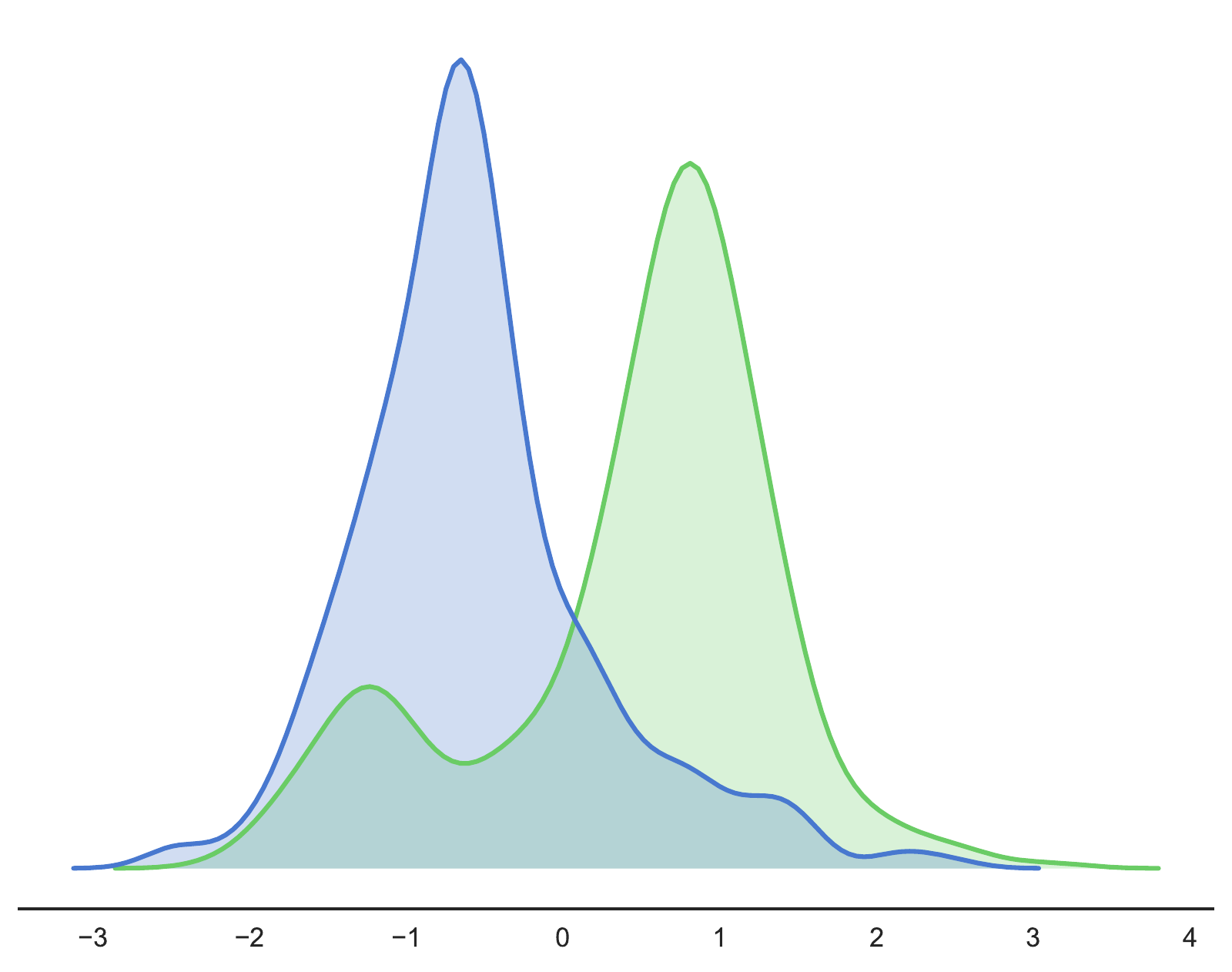}}
\subfigure[]{\label{fig:PPGMM}\includegraphics[width=1.6in]{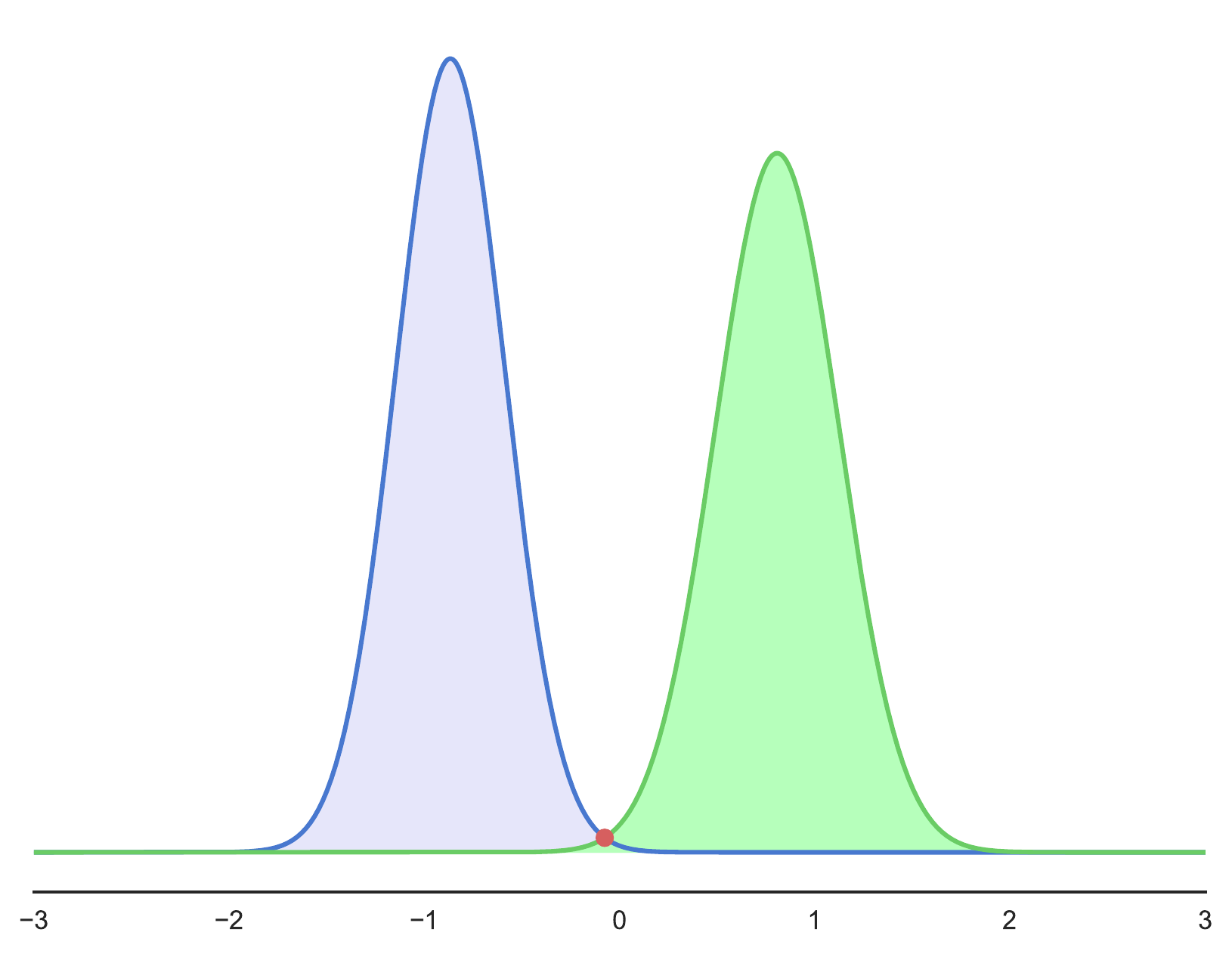}}
\subfigure[]{\label{fig:PPc3l}\includegraphics[width=1.6in]{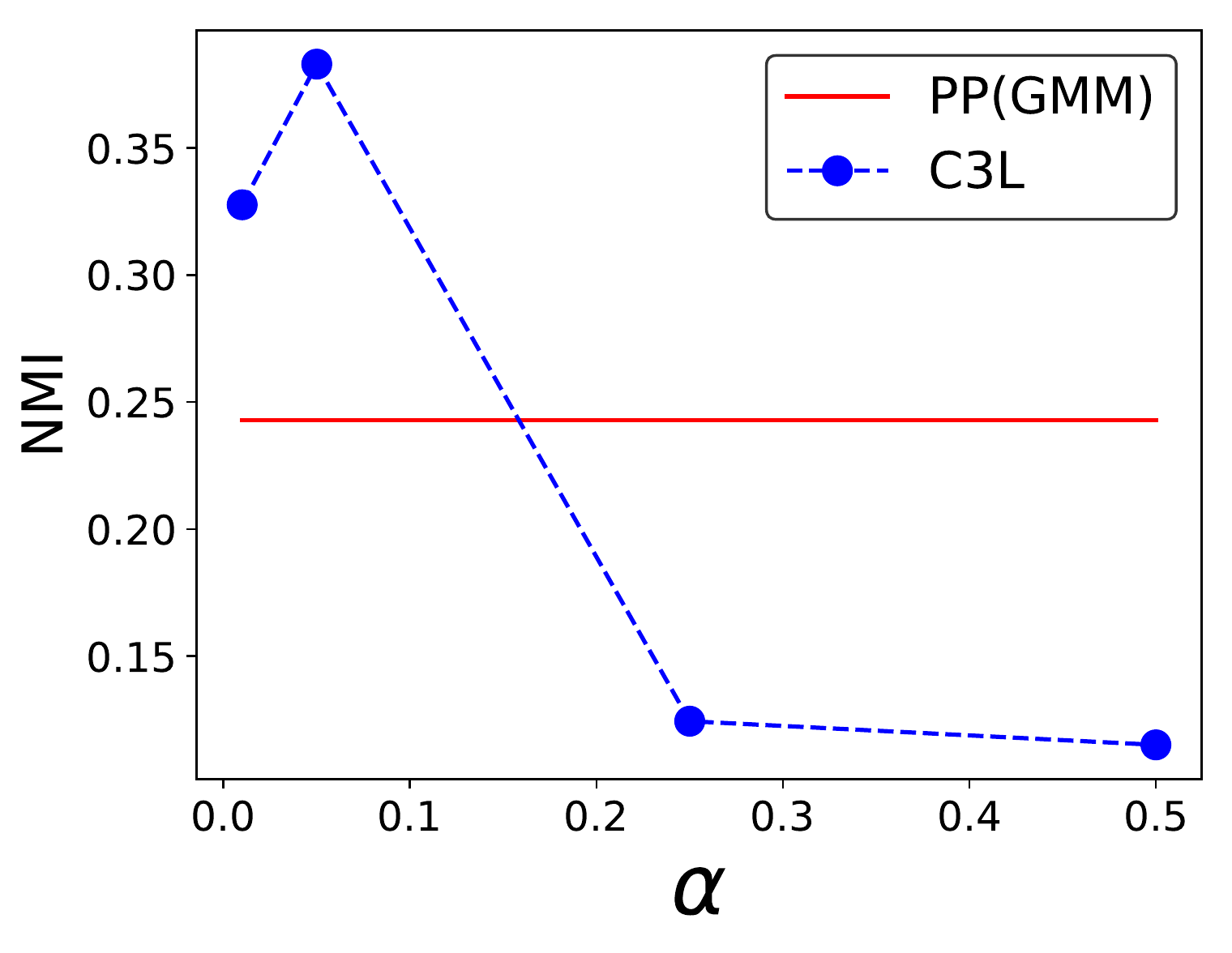}}
\caption{{\ccl} with a decision boundary obtained by projection pursuit. Density estimated from true classes in one-dimensional subspace minimizing kurtosis coefficient \ref{fig:PP}. Clusters detected by GMM in projected space \ref{fig:PPGMM}. {\ccl} clustering with decision boundary found by one-dimensional GMM \ref{fig:PPc3l}.}\vspace{-0.2in}
	\label{fig:chem} 
\end{figure}

We considered Statlog data set retrieved from UCI repository concerning credit card applications \cite{asuncion2007uci}. Each example is represented by 14 attributes and belongs to one of two classes. First class contains 307 instances, while the second one has 383 objects. 

We used R package REPPlab\footnote{\url{https://github.com/cran/REPPlab/}}, which implements several indices for projecting the data on the associated one-dimensional directions. Since we aim at finding clusters, we chose such a direction, which minimizes the kurtosis coefficient of projection. Figure \ref{fig:PP} presents density estimated from two underlying classes in an optimal one-dimensional subspace. Let us observe that these classes cannot be separated in the reduced space. Given one-dimensional view we applied classical GMM to detect two clusters (see Figure \ref{fig:PPGMM}). The agreement between true classification and GMM clustering was measured by NMI, which gave the score of 0.24. 

GMM clustering in one dimensional space generated linear decision boundary $H$. In order to improve this clustering, we passed $H$ to {\ccl}, which was run with two clusters for four different leakage levels $\alpha \in \{0.01, 0.05, 0.25, 0.5\}$. 

The results presented in Figure \ref{fig:PPc3l} show that applying {\ccl} with low leakage levels led to the improvement of linear decision boundary produced by GMM in projected space. High values of NMI for $\alpha \in \{0.01, 0.05\}$ and its low values for $\alpha \in \{0.25, 0.5\}$ prove that the information retrieved by applying PP was meaningful. In other words, strictly unsupervised model-based clustering could not find true structure of classes, while combining one-dimensional projection with constrained clustering gave significantly better results.

\subsection{Detection of chemical classes} \label{exp:last}

Finally, we considered a real data set of 2497 chemical compounds which was manually clustered into four categories by the experts in the field 
 \cite{warszycki2013linear}. Each compound was characterized by its structural features using Klekota-Roth fingerprint (4860 attributes) \cite{klekota2008chemical, smieja2016average}. To reduce a dimensionality of the space, PCA was applied to attribute vectors and only five principle components were used. 

Additionally, every compound was assigned to active or inactive class based on its binding constant $K_i \geq 0$ measured for 5-HT$_{1A}$ receptor, one of the proteins responsible for the regulation of central nervous system: compounds with $K_i < 50$ were considered active while those with $K_i \geq 50$ were treated as inactives \cite{olivier1999}. Summarizing, this data set is contained in $\R \times \R^5$ space and a decision boundary is given by $H=\{50\} \times \R^5$. We investigate whether the information of compounds activity allows to obtain a partition which is more similar to the expert reference grouping with four chemical categories. Observe that this case study represents more realistic scenario than previously prepared experiments.

\begin{figure}[t]
\centering
\subfigure[]{\label{fig:h}\includegraphics[width=2.3in]{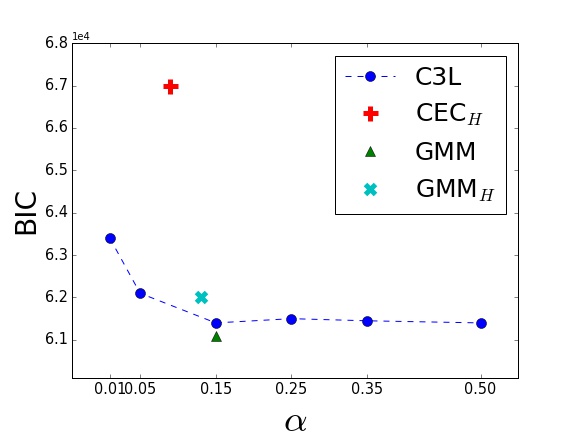}}
\subfigure[]{\label{fig:ll}\includegraphics[width=2.3in]{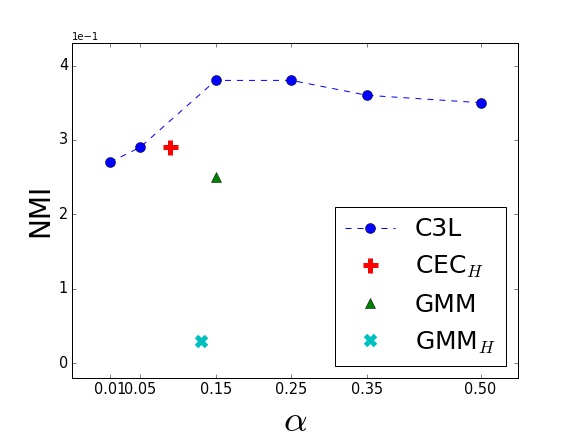}}
\caption{Results for chemical data set.}\vspace{-0.2in}
	\label{fig:chem} 
\end{figure}

Figures \ref{fig:h} and \ref{fig:ll} show that {\ccl} gave the second best model in terms of BIC (worse than GMM). Moreover, the partition obtained by {\ccl} was the most similar to the reference grouping. Observe that high NMI values coincide with the stabilization of BIC (compare Figure \ref{fig:h} with Figure \ref{fig:ll}). The highest similarity was achieved for $\alpha = 0.15$. {\ccl} with lower leakage levels as well as CEC$_H$ and GMM$_H$ gave worse results, because the binding constant does not reflect exactly the chemical classes (as it was prepared in previous experiment). Since the activity barrier represents a kind of noisy decision boundary it should be taken into account with lower confidence level (higher leakage). This case study demonstrated that {\ccl} can be successfully applied in natural machine learning problems.


\subsection{Summary of the experiments}

Experimental study showed that {\ccl} can be successfully applied in a wide range of problems. If we have a coarse categorization of data into two basic classes, {\ccl} can be used to construct clusters, which agree with both expert categorization and true data distribution. In other words, {\ccl} is capable of detecting interesting subgroups, which belong to one of two classes with a fixed probability of error. The quality of such model was verified by applying internal clustering measures such as BIC (Section \ref{sec:ex1}) as well as by comparing the results with reference grouping created by a domain expert on various examples of data (Section \ref{sec:ex2}).

In the case of partial labeling, where only a small sample of data is categorized, we can first apply any binary classifier to construct approximated decision boundary on the entire data space (Section \ref{sec:ex2}). If constructed classification is meaningful (accurate), then subgroups can be found by applying {\ccl} with low leakage levels, while in more uncertain situations the leakage levels should be higher. In particular, we showed that compounds activity delivers small amount of information about structural division of chemical space (Section \ref{exp:last}).

Finally, we demonstrated that {\ccl} can also be used in strictly unsupervised cases. {\ccl} can improve the results of simpler clustering techniques by introducing nonlinearities in clusters descriptions. In particular, we showed that combining {\ccl} with projection pursuit allows to construct better clustering structure than using projection pursuit or model-based clustering individually (Section \ref{sec:ex3}).

\section{Conclusion and future work}

The paper presented a clustering model, {\ccl}, which integrates information coming from the initial classification with the true structure of data. The idea is based on retrieving Gaussian-like clusters which are contained in one of initial classes within a predefined confidence level. Experimental results prove that our algorithm provides high quality model, which can be used for discovering natural subgroups in partially classified data spaces. In the optimization procedure we restricted the problem to special type of Gaussian densities, which is the main limitation of our model. It is worth to eliminate this assumption in future and extend {\ccl} to arbitrary Gaussian distributions using either analytical calculations or  by applying some numerical procedures.

We also plan to examine its practical usefulness in real-life problems. In particular, we will focus on applying {\ccl} in text translation systems to find clusters, which combine domain knowledge with a distribution of data (see introduction for details). Since Gaussian mixture model does not fit well to high dimensional data, such as text, it might be needed to extend the model to other probability models or to use projections to lower dimensional spaces. In our opinion, combining {\ccl} with projection pursuit is an approach of great practical potential, which needs further studies. While projection pursuit allows to select optimal low dimensional view of data for finding general clusters regions, {\ccl} is able to use this knowledge to discover detailed clusters description.

\appendix 

\section{Algorithm} \label{app:algo}

Given a density model $\G_{1,N-1}$ and a linear constraint $(H, \alpha)$, where $H =\{0\} \times \R^{N-1}$, the one cluster {\ccl} cost function can be evaluated, as a sum of partial costs (see Corollary \ref{cor:reduction}):
$$
E^\alpha_{H}(X \| \G_{1,N-1}) = E^{\alpha}_{\{0\}}(X\usub{1}\|\G_1)+h^\times(X\ussub{2}{N}\|\G_{N-1}).
$$
The optimization of $(N-1)$-dimensional density $g_{N-1} \in \G_{N-1}$ is independent of the constraint. Its optimal parameters are the maximum likelihood estimators (MLE) of a mean and covariance of a cluster, i.e., 
\begin{equation}\label{f1}
g_{N-1} = \nor(\mm{X\ussub{2}{N}}, \SS{X\ussub{2}{N}}).
\end{equation}
The constraint only affects the form of remaining one dimensional density $g_1 \in \G_1$. Making use of the results and the notations of Theorem \ref{thm} it is calculated as:
\begin{equation}\label{f2}
g_1 = \nor(\m{X\usub{1}}{\alpha}, \s{X\usub{1}}{\alpha}).
\end{equation}

The one cluster cost functions of every group are plugged into the expression \eqref{eq:total} and determine the overall {\ccl} cost. Its minimization can be performed in an iterative procedure which is a modified online Hartigan algorithm employed in k-means method \cite{hartigan1979algorithm}. Basically, the procedure consists of two steps: initialization and iteration. In the initialization stage, $k \geq 2$ nonempty groups are formed randomly. Then the elements are reassigned between clusters in order to minimize the criterion function. Presented clustering procedure is non-deterministic and one of the local minima is found \cite{jain1999}. To provide more stable and accurate results, the algorithm has to be run a couple of times and a partition with a minimal cost should be chosen. A pseudocode of {\ccl} is given below:

\begin{flushleft}
\begin{algorithmic}[1] 
\STATE \textbf{INPUT:}
	\STATE $X \subset \R^N$
	\STATE $k$ - number of clusters
	\STATE $(H,\alpha)$ - linear constraint\\
\STATE \textbf{OUTPUT:}\\
	\STATE Final partition $\Y$ of $X$\\
\STATE \textbf{INITIALIZATION:}\\
		\STATE $X = T_H(X)$ \COMMENT{data transformation such that $H$}\\
		\COMMENT{ is mapped into $(\{0\} \times \R^{N-1})$}
		\STATE $\Y \leftarrow$ random partition of $X$ into $k$ groups 
	\FORALL{$Y \in \Y$}
		\STATE $g^Y_{N-1} \leftarrow MLE(Y\ussub{2}{N} \| \G_{N-1})$ \COMMENT{use standard MLE to find optimal density \eqref{f1}}
		\STATE $g^Y_{1} \leftarrow ConstrMLE(Y\usub{1} \| \G_{1} \text{ s.t. } (\{0\} \times \R^{N-1}, \alpha))$ \COMMENT{use Theorem \ref{thm} for density estimation \eqref{f2}}
		\STATE $g^Y \leftarrow g^Y_1 \cdot g^Y_{N-1}$
	\ENDFOR \\
\STATE \textbf{ITERATION:}\\
	\WHILE{NOT Done}
		\STATE $Done \leftarrow True$
		\FORALL{$x \in X$}
			\STATE $Y_{new} \leftarrow arg \max\limits_{Y \in \Y} \Delta E_{\{0\} \times \R^{N-1}}^\alpha(x,Y)$ \COMMENT{find a membership of $x$ maximizing the decrease of cost}\label{lineCost}
			\IF{$Y_{new} \neq x.cluster$}
				\STATE $Done \leftarrow False$
				\STATE $Reassign(x, Y_{new}, Y_{old})$ \COMMENT{reassign $x$ from $Y_{old}$ to $Y_{new}$}
				\STATE $Update((Y_{new}, g^{new}), (Y_{old}, g^{old}), x, \alpha)$ \COMMENT{recalculate clusters parameters after the change} \label{lineUpdate}
			\ENDIF
		\ENDFOR
	\ENDWHILE
\end{algorithmic}
\end{flushleft}

\section{Proof of Lemma \ref{malyLem}} \label{app:lem}

We consider the limiting case of $\alpha \to 0$. Therefore, without loss of generality we may assume that there exists $\alpha_0 > 0$ such $\m{X}{\alpha}$ and $\s{X}{\alpha}$, for $\alpha < \alpha_0$, are given by \eqref{eq:optUnivariate}, i.e.,
$$
\begin{array}{ll}
\m{X}{\alpha} & := \frac{-(\k)^2 \mm{X}+\sgn(\mm{X}) \k\sqrt{((\k)^2+4)\mm{X}^2 + 4\ss{X}^2}}{2},\\
\s{X}{\alpha} & := \frac{|\mm{X}^\alpha|}{\k},
\end{array}
$$
where 
$$
\k = \Phi_{\nor(0,1)}^{-1}(1-\alpha).
$$  

Let us calculate the limiting value of $\m{X}{\alpha}$:
$$
\begin{array}{l}
2 \m{X}{\alpha} = \\
 -(\k)^2 \mm{X}+\sgn(\mm{X})\k\sqrt{((\k)^2+4)\mm{X}^2 + 4\ss{X}^2} =\\
 -(\k)^2 \mm{X}+(\k)^2 |\mm{X}| \sgn(\mm{X}) \sqrt{1+\frac{4}{(\k)^2} + \frac{4\ss{X}^2}{(\k)^2 \mm{X}^2}} =\\
 -(\k)^2 \mm{X}+(\k)^2 \mm{X} \sqrt{1+\frac{4}{(\k)^2}\left( 1 + \frac{\ss{X}^2}{\mm{X}^2}\right)}.\\
\end{array}
$$
Making use of Taylor expansion we have
$$
\sqrt{1+x} = 1+\frac{x}{2} + \varepsilon_x \text{ , for } \varepsilon_x = -\frac{1}{8}(1+\xi_x)^{-\frac{3}{2}} \xi_x^2, 
$$ 
where  $\xi_x \in (0,x)$. Consequently, in our situation there exists $\xi_\alpha \in (0, \frac{4}{(\k)^2}( 1 + \frac{\ss{X}^2}{\mm{X}^2}))$ such that for $\varepsilon_\alpha = -\frac{1}{8}(1+\xi_\alpha)^{-\frac{3}{2}} \xi_\alpha^2$ we get
$$
\begin{array}{l}
2 \m{X}{\alpha} =\\
 -(\k)^2 \mm{X}+(\k)^2 \mm{X}\sqrt{1+\frac{4}{(\k)^2}( 1 + \frac{\ss{X}^2}{\mm{X}^2})} =\\
 -(\k)^2 \mm{X}+(\k)^2 \mm{X} \left(1 + \frac{1}{2}  \frac{4}{(\k)^2} \left( 1 + \frac{\ss{X}^2}{\mm{X}^2}\right) + \varepsilon_\alpha \right)=\\
\underbrace{(\k)^2 \mm{X} \frac{2}{(\k)^2}\left( 1 + \frac{\ss{X}^2}{\mm{X}^2}\right)}_{\text{(I)}} + \underbrace{(\k)^2 \mm{X} \varepsilon_\alpha}_{\text{(II)}},
\end{array}
$$
Clearly, (I) converges to $2 (\mm{X} + \frac{\ss{X}^2}{\mm{X}})$, as $\alpha \to 0$. We consider the term (II). Observe that for sufficiently small $\alpha > 0$, we have:
$$
\begin{array}{ll}
|(\k)^2 \mm{X} \varepsilon_\alpha|& =  \frac{1}{8} (\k)^2 |\mm{X}| (1+\xi_\alpha)^{-\frac{3}{2}} \xi_\alpha^2 \\
& \leq  \frac{1}{8} (\k)^2 |\mm{X}| \cdot 1 \cdot \frac{16}{(\k)^4}( 1 + \frac{\ss{X}^2}{\mm{X}^2})^2\\
 & \leq 2 |\mm{X}| \frac{1}{(\k)^2}( 1 + \frac{\ss{X}^2}{\mm{X}^2})^2 \xrightarrow{ \alpha \to 0 } 0.
\end{array}
$$
Concluding $\m{X}{\alpha} \to \mm{X} + \frac{\ss{X}^2}{\mm{X}}$, as $\alpha \to 0$.

From the above calculation we directly get,
$$
\s{X}{\alpha} = \frac{\m{X}{\alpha}}{\k} \xrightarrow{\alpha \to 0} 0,
$$
which completes the proof.

\section{Proof of Lemma \ref{lemLast}} \label{app:lem1}

Let $\alpha > 0$. The violation of the linear constraint $(\{0\}, \alpha)$ by a density $\nor(m,\sigma)$ is verified by the condition:
$$
|m| \geq \k \sigma \text{, where } \k = \Phi_{\nor(0,1)}^{-1}(1 - \alpha).
$$

Theorem \ref{thm} states that the parameters of optimal one dimensional Gaussian density are calculated as MLE within the cluster if only it does not violate the linear constraint. However, in the limiting case there exists $\alpha_0$ such that for all $\alpha < \alpha_0$ the constraint is violated and consequently the formulas \eqref{eq:optUnivariate} for $\m{\pm}{\alpha}, \s{\pm}{\alpha}$ give optimal solutions.

Let $x> 0$. For normal distributions $g_\pm^\alpha = \nor(\m{\pm}{\alpha}, \s{\pm}{\alpha})$ we have
$$
\begin{array}{l}
\ln g^\alpha_\pm(x)  = \\ -\frac{1}{2}\ln (2\pi) - \ln(\s{\pm}{\alpha}) - \frac{(x - \m{\pm}{\alpha})^2}{2 (\s{\pm}{\alpha})^2}=\\
 - \frac{x^2}{2 (\s{\pm}{\alpha})^2} + \frac{2 x \m{\pm}{\alpha}}{2 (\s{\pm}{\alpha})^2} - \frac{(\m{\pm}{\alpha})^2}{2 (\s{\pm}{\alpha})^2} - \ln(\s{\pm}{\alpha}) -\frac{1}{2}\ln (2\pi). \\
\end{array}
$$

Making use of equality $\sigma_\pm^2 = \frac{m_\pm^2}{(\k)^2}$ we get
$$
\begin{array}{l}
\ln g^\alpha_\pm(x)  = \\
\frac{1}{2} \left( - \frac{(\k)^2}{(m_\pm^\alpha)^2} x^2  + \frac{2 (\k)^2}{m_\pm^\alpha} x - (\k)^2 - \ln \frac{(m_\pm^\alpha)^2}{(\k)^2} - \ln (2\pi)\right),
\end{array}
$$
and consequently
$$
\begin{array}{l}
\ln g^\alpha_+(x) -\ln g^\alpha_-(x)  = \\
\frac{(\k)^2 x}{(\m{+}{\alpha})^2 (\m{-}{\alpha})^2} \left( -x (\m{-}{\alpha})^2 +  x (\m{+}{\alpha})^2 + \right. \\ 
\left. 2 \m{+}{\alpha} (\m{-}{\alpha})^2 - 2 (\m{+}{\alpha})^2 \m{-}{\alpha} \right) - \ln \frac{(\m{+}{\alpha})^2}{(\m{-}{\alpha})^2}.
\end{array}
$$

Since $\m{\pm}{\alpha} \to \mm{\pm} + \frac{\ss{\pm}^2}{\mm{\pm}}$, as $\alpha \to \infty$ (see Theorem \ref{maleTw}) then
$$
0 < \frac{x}{(\m{+}{\alpha})^2 (\m{-}{\alpha})^2} < \infty \text{ and }
0 < \left| \ln \frac{(\m{+}{\alpha})^2}{(\m{-}{\alpha})^2}\right| < \infty
\text{, as } \alpha \to 0.
$$
Therefore,
$$
\ln g^\alpha_+(x) -\ln g^\alpha_-(x)  \to +\infty,
$$ 
iff
\begin{equation}\label{llast}
 -x (\m{-}{\alpha})^2 + x (\m{+}{\alpha})^2 + 2 \m{+}{\alpha} (\m{-}{\alpha})^2 - 2 (\m{+}{\alpha})^2 \m{-}{\alpha} > 0 \text{, as } \alpha \to 0.
\end{equation}
The inequality \eqref{llast} holds iff
$$
\begin{array}{ll}
\text{either: } & x > 2 \m{+}{\alpha}\text{ and } \m{-}{\alpha} > \frac{\m{+}{\alpha} x}{2 \m{+}{\alpha} - x} \\
\text{or: } & x \leq 2 \m{+}{\alpha}.
\end{array}
$$
In the limiting case the last inequality expands to:
$$
x \leq 2 \m{+}{\alpha} \xrightarrow{\alpha \to 0} 2 \left( \mm{+} + \frac{\ss{+}^2}{\mm{+}} \right),
$$
which completes the proof.

\section*{Acknowledgement}

This research was partially supported by the National Science Centre (Poland) grant no. 2016/21/D/ST6/00980 and grant no. 2015/19/B/ST6/01819.

\bibliographystyle{plain}
\bibliography{bib}   

\begin{thebibliography}{10}

\bibitem{aggarwal2012mining}
Charu~C Aggarwal and ChengXiang Zhai.
\newblock {\em Mining text data}.
\newblock Springer Science \& Business Media, 2012.

\bibitem{ambroise2001learning}
Christophe Ambroise, Thierry Denoeux, G{\'e}rard Govaert, and Philippe Smets.
\newblock Learning from an imprecise teacher: probabilistic and evidential
  approaches.
\newblock {\em Applied Stochastic Models and Data Analysis}, 1:100--105, 2001.

\bibitem{ana2003robust}
L~N~F Ana and Anil~K Jain.
\newblock Robust data clustering.
\newblock In {\em Proc. IEEE Computer Society Conf. on Computer Vision and
  Pattern Recognition (CVPR)}, volume~2, pages II--128, 2003.

\bibitem{Asafi_ConstraintsFeatures}
S.~Asafi and D.~Cohen-Or.
\newblock Constraints as features.
\newblock In {\em Proc. IEEE Conf. on Computer Vision and Pattern Recognition
  (CVPR)}, pages 1634--1641, Portland, OR, June 2013.

\bibitem{basu2008constrained}
S.~Basu, I.~Davidson, and K.~Wagstaff.
\newblock {\em Constrained clustering: Advances in algorithms, theory, and
  applications}.
\newblock CRC Press, 2008.

\bibitem{basu2004probabilistic}
Sugato Basu, Mikhail Bilenko, and Raymond~J Mooney.
\newblock A probabilistic framework for semi-supervised clustering.
\newblock In {\em Proc. ACM SIGKDD Int. Conf. on Knowledge Discovery and Data
  Mining}, pages 59--68. ACM, 2004.

\bibitem{chechik2005information}
Gal Chechik, Amir Globerson, Naftali Tishby, and Yair Weiss.
\newblock Information bottleneck for gaussian variables.
\newblock {\em Journal of Machine Learning Research}, 6(Jan):165--188, 2005.

\bibitem{fraley1998many}
Chris Fraley and Adrian~E Raftery.
\newblock How many clusters? {W}hich clustering method? {A}nswers via
  model-based cluster analysis.
\newblock {\em The Computer Journal}, 41(8):578--588, 1998.

\bibitem{gondek2007non}
D.~Gondek and T.~Hofmann.
\newblock Non-redundant data clustering.
\newblock {\em Knowl Inf Syst}, 12(1):1--24, 2007.

\bibitem{hartigan1979algorithm}
J.~A. Hartigan and M.~A. Wong.
\newblock Algorithm as 136: A k-means clustering algorithm.
\newblock {\em Journal of the Royal Statistical Society. Series C (Applied
  Statistics)}, 28(1):100--108, 1979.

\bibitem{hou2014re}
Siyuan Hou and Peter~D Wentzell.
\newblock Re-centered kurtosis as a projection pursuit index for multivariate
  data analysis.
\newblock {\em Journal of Chemometrics}, 28(5):370--384, 2014.

\bibitem{jain1999}
A.~K. Jain and P.~J. Murty, M. N.and~Flynn.
\newblock Data clustering: a review.
\newblock {\em ACM Comput. Surv.}, 31(3):264--323, 1999.

\bibitem{kahraman2013development}
H~Tolga Kahraman, Seref Sagiroglu, and Ilhami Colak.
\newblock The development of intuitive knowledge classifier and the modeling of
  domain dependent data.
\newblock {\em Knowledge-Based Systems}, 37:283--295, 2013.

\bibitem{klekota2008chemical}
Justin Klekota and Frederick~P Roth.
\newblock Chemical substructures that enrich for biological activity.
\newblock {\em Bioinformatics}, 24(21):2518--2525, 2008.

\bibitem{asuncion2007uci}
M.~Lichman.
\newblock {UCI} machine learning repository, 2013.

\bibitem{liu2015clustering}
H.~Liu and Y.~Fu.
\newblock Clustering with partition level side information.
\newblock In {\em Proc. IEEE Int. Conf. Data Mining}, pages 877--882. IEEE,
  2015.

\bibitem{loperfido2013skewness}
Nicola Loperfido.
\newblock Skewness and the linear discriminant function.
\newblock {\em Statistics \& Probability Letters}, 83(1):93--99, 2013.

\bibitem{loperfido2015vector}
Nicola Loperfido.
\newblock Vector-valued skewness for model-based clustering.
\newblock {\em Statistics \& Probability Letters}, 99:230--237, 2015.

\bibitem{Lu_PPC}
Zhengdong Lu and Todd~K. Leen.
\newblock Semi-supervised learning with penalized probabilistic clustering.
\newblock In {\em Advances in Neural Information Processing Systems (NIPS)},
  pages 849--856, Vancouver, British Columbia, Canada, December 2005.

\bibitem{lu2007semi}
Zhengdong Lu and Todd~K Leen.
\newblock Semi-supervised clustering with pairwise constraints: A
  discriminative approach.
\newblock In {\em AISTATS}, pages 299--306, 2007.

\bibitem{mclachlan2004finite}
G.~McLachlan and D.~Peel.
\newblock {\em Finite mixture models}.
\newblock John Wiley \& Sons, Inc., 2004.

\bibitem{nguyen2007consensus}
R.~Nguyen, N.and~Caruana.
\newblock Consensus clusterings.
\newblock In {\em Proc. IEEE Int. Conf. Data Mining}, pages 607--612. IEEE,
  2007.

\bibitem{olivier1999}
Berend Olivier, Willem Soudijn, and Ineke van Wijngaarden.
\newblock The 5-{HT}$_{1A}$ receptor and its ligands: structure and function.
\newblock In Ernst Jucker, editor, {\em Progress in Drug Research}, volume~52,
  pages 103--165. 1999.

\bibitem{pedrycz2008fuzzy}
Witold Pedrycz, Alberto Amato, Vincenzo Di~Lecce, and Vincenzo Piuri.
\newblock Fuzzy clustering with partial supervision in organization and
  classification of digital images.
\newblock {\em IEEE Transactions on Fuzzy Systems}, 16(4):1008--1026, 2008.

\bibitem{pedrycz1997fuzzy}
Witold Pedrycz and James Waletzky.
\newblock Fuzzy clustering with partial supervision.
\newblock {\em IEEE Transactions on Systems, Man, and Cybernetics, Part B
  (Cybernetics)}, 27(5):787--795, 1997.

\bibitem{pena2001cluster}
Daniel Pe{\~n}a and Francisco~J Prieto.
\newblock Cluster identification using projections.
\newblock {\em Journal of the American Statistical Association},
  96(456):1433--1445, 2001.

\bibitem{rissanen1985minimum}
J.~Rissanen.
\newblock {\em Minimum description length principle}.
\newblock Wiley Online Library, 1985.

\bibitem{salah2016model}
Aghiles Salah, Nicoleta Rogovschi, and Mohamed Nadif.
\newblock Model-based co-clustering for high dimensional sparse data.
\newblock In {\em Proc. Int. Conf. on Artificial Intelligence and Statistics},
  pages 866--874, 2016.

\bibitem{shental2004computing}
Noam Shental, Aharon Bar-hillel, Tomer Hertz, and Daphna Weinshall.
\newblock Computing gaussian mixture models with em using equivalence
  constraints.
\newblock In S.~Thrun, L.~K. Saul, and B.~Scholkopf, editors, {\em Advances in
  Neural Information Processing Systems (NIPS)}, pages 465--472. MIT Press,
  2004.

\bibitem{smieja2015spherical}
Marek {\'S}mieja and Jacek Tabor.
\newblock Spherical wards clustering and generalized voronoi diagrams.
\newblock In {\em Data Science and Advanced Analytics (DSAA), 2015. 36678 2015.
  IEEE International Conference on}, pages 1--10. IEEE, 2015.

\bibitem{smieja2016average}
Marek {\'S}mieja and Dawid Warszycki.
\newblock Average information content maximization—a new approach for
  fingerprint hybridization and reduction.
\newblock {\em PloS one}, 11(1):e0146666, 2016.

\bibitem{smieja2016constrained}
Marek {\'S}mieja and Magdalena Wiercioch.
\newblock Constrained clustering with a complex cluster structure.
\newblock {\em Advances in Data Analysis and Classification}, pages 1--26,
  2016.

\bibitem{spurek2017r}
P~Spurek, K~Kamieniecki, J~Tabor, K~Misztal, and M~{\'S}mieja.
\newblock R package cec.
\newblock {\em Neurocomputing}, 237:410--413, 2017.

\bibitem{spurek2017general}
Przemys{\l}aw Spurek.
\newblock General split gaussian cross--entropy clustering.
\newblock {\em Expert Systems with Applications}, 68:58--68, 2017.

\bibitem{tabor2014cross}
J.~Tabor and P.~Spurek.
\newblock Cross-entropy clustering.
\newblock {\em Pattern Recognition}, 47(9):3046--3059, 2014.

\bibitem{tishby2000information}
Naftali Tishby, Fernando~C. Pereira, and William Bialek.
\newblock The information bottleneck method.
\newblock In {\em Proc. Allerton Conf. on Communication, Control, and
  Computing}, pages 368--377, Monticello, IL, September 1999.

\bibitem{wagstaff2001constrained}
Kiri Wagstaff, Claire Cardie, Seth Rogers, Stefan Schr{\"o}dl, et~al.
\newblock Constrained k-means clustering with background knowledge.
\newblock In {\em ICML}, volume~1, pages 577--584, 2001.

\bibitem{Wang_FCSC}
Ziang Wang and Ian Davidson.
\newblock Flexible constrained spectral clustering.
\newblock In {\em Proc. ACM Int. Conf. on Knowledge Discovery and Data Mining
  (SIGKDD)}, pages 563--572, Washington, DC, July 2010.

\bibitem{warszycki2013linear}
D.~Warszycki, S.~Mordalski, K.~Kristiansen, R.~Kafel, I.~Sylte, Z.~Chilmonczyk,
  and A.~J. Bojarski.
\newblock A linear combination of pharmacophore hypotheses as a new tool in
  search of new active compounds--an application for 5-ht1a receptor ligands.
\newblock {\em PloS ONE}, 8(12):e84510, 2013.

\bibitem{wehrens2004model}
Ron Wehrens, Lutgarde~MC Buydens, Chris Fraley, and Adrian~E Raftery.
\newblock Model-based clustering for image segmentation and large datasets via
  sampling.
\newblock {\em Journal of Classification}, 21(2):231--253, 2004.

\bibitem{zhu2009introduction}
Xiaojin Zhu and Andrew~B Goldberg.
\newblock Introduction to semi-supervised learning.
\newblock {\em Synthesis lectures on artificial intelligence and machine
  learning}, 3(1):1--130, 2009.

\end{thebibliography}

\end{document}